\documentclass[a4paper,12pt]{amsart}
\usepackage[utf8]{inputenc}
\usepackage[T1]{fontenc}
\usepackage[UKenglish]{babel}
\usepackage[a4paper,margin=28mm]{geometry}
\usepackage{verbatim}

\allowdisplaybreaks[4]
\usepackage{times}
\usepackage{dsfont,mathrsfs}
\usepackage{amsmath}
\usepackage{amsthm}
\usepackage{amssymb}
\usepackage{amsfonts}
\usepackage{latexsym}
\usepackage{booktabs}
\usepackage{mathtools}
 \usepackage[ruled,linesnumbered]{algorithm2e}
\usepackage{hyperref}%建立label的超链接
\hypersetup{hypertex=true,
	colorlinks=true,
	linkcolor=blue,
	anchorcolor=blue,
	citecolor=blue,
	pdftitle={Overleaf Example},
	pdfpagemode=FullScreen}
\usepackage{xcolor}

\newtheorem{theorem}{Theorem}[section]
\newtheorem{lemma}[theorem]{Lemma}
\newtheorem{proposition}[theorem]{Proposition}

\newtheorem{assumption}{Assumption}

\theoremstyle{definition}

\newtheorem{remark}[theorem]{Remark}

\numberwithin{equation}{section}

\renewcommand{\labelenumi}{\roman{enumi}}
\renewcommand\theenumi\labelenumi
\renewcommand{\leq}{\leqslant}
\renewcommand{\le}{\leqslant}
\renewcommand{\geq}{\geqslant}
\renewcommand{\ge}{\geqslant}

\newcommand{\R}{\mathbb{R}}
\newcommand{\E}{\mathbb{E}}

\usepackage{marginnote}%this avoids floating marginal notes
\marginparwidth60pt %this avoids the marginal notes to invade the text

\title[Approximation to Deep Q-Network by Stochastic Delay Differential Equations]{Approximation to Deep Q-Network by Stochastic Delay Differential Equations}

\author[J. Lu]{Jianya Lu}
\address[J.~Lu]{School of Mathematics, Statistics and Actuarial Science, University of Essex, UK}
\email{jianya.lu@essex.ac.uk}

\author[Y. Mo]{Yingjun Mo}
\address[Y.~Mo]{1. Department of Mathematics, Faculty of Science and Technology, University of Macau, Macau, 999078, China; 2. Zhuhai UM Science, Technology Research Institute, Zhuhai, 519031, China}
\email{yc27477@um.edu.mo}

%\affil*[1]{\orgdiv{Faculty of Science and Technology}, \orgname{University of Macau}, \orgaddress{\street{E11 Avenida da Universidade}, \city{Macau}, \postcode{999078}, \country{China}}}

%\affil[2]{\orgname{Zhuhai UM Science \& Technology Research Institute}, \orgaddress{ \city{Zhuhai}, \postcode{519031}, \country{China}}}

%\affil[3]{\orgdiv{School of Mathematical Sciences}, \orgname{Peking University}, \orgaddress{\street{No.5 Yiheyuan Road,Haidian District}, \city{Beijing}, \postcode{100871}, \country{China}}}

\begin{document}
	
\keywords{Deep Q-network, stochastic differential delay equations, diffusion approximation,  Wasserstein-1 distance, refined Lindeberg principle, operator comparison}
\subjclass[2010]{60H10; 60H30; 68T05; 68T37; 90C59.}

\begin{abstract}
Despite the significant breakthroughs that the Deep Q-Network (DQN) has brought to reinforcement learning, its theoretical analysis remains limited. In this paper, we construct a stochastic differential delay equation (SDDE) based on the DQN algorithm and estimate the Wasserstein-1 distance between them. We provide an upper bound for the distance and prove that the distance between the two converges to zero as the step size approaches zero. This result allows us to understand DQN’s two key techniques, the experience replay and the target network, from the perspective of continuous systems. Specifically, the delay term in the equation, corresponding to the target network, contributes to the stability of the system. Our approach leverages a refined Lindeberg principle and an operator comparison to establish these results.
\end{abstract}

\maketitle

\section{Introduction}
Reinforcement Learning (RL)\cite{bertsekas2019reinforcement,sutton2018reinforcement} has been a prominent field of machine learning, and has gained tons of attention in recent decades. It studies sequential decision-making through interactions with environments to develop a policy that determines actions based on the current state to maximise long-term return.

Q-learning is one of the most fundamental learning strategies in RL, designed to make optimal decisions through the action-value function. Since its introduction by \cite{watkins1992q}, it has been extensively studied. However, in large-scale and continuous state spaces, as well as in scenarios where observed data exhibits strong correlations, the algorithm becomes unstable and may no longer be applicable. The deep Q-Network (DQN) introduced in the seminal work by \cite{mnih2015human} achieved a breakthrough. Besides combining Q-learning with deep neural networks, the DQN puts forward two novel and crucial tricks, an experience replay, and a target network. This groundbreaking achievement has spurred further exploration in the realm of deep reinforcement learning, leading to the development of approaches such as Double DQN \cite{van2016deep}, Dueling DQN \cite{wang2016dueling}.  In terms of applications, \cite{xu2012url} first used deep reinforcement learning for autonomic cloud management in computer science, which has a similar idea to DQN. 

Despite the significant success of DQN in practice, a deep theoretical understanding of its underlying mechanism, especially the two tricks, remains limited. In this paper, we construct a stochastic differential delay equation (SDDE) based on the DQN iteration and demonstrate that the weight of the action value function in the iteration of DQN is close to the solution of the SDDE in the Wasserstein-1 distance. This diffusion approximation enables us to analyze DQN from the perspective of continuous systems, providing insights into the role of its two tricks during the learning process.

The trick of experience replay allows historical data to be stored in a replay buffer and randomly sampled during the iteration process, enabling the data to be approximately treated as independent and identically distributed (i.i.d.). This property is crucial for analyzing DQN through a continuous system. The corresponding stochastic differential equation has a unique solution, which also provides a continuous perspective on the convergence of the DQN algorithm.

The trick of the target network is that the weights of the Q-network are updated periodically. In the construction of the SDDE, this technique corresponds to the delay term in the equation. An SDDE incorporates the history of the process and combines it with the current state for prediction. As shown by \cite{scheutzow2013exponential}, the introduction of a delay in the diffusion term of an SDDE often reduces the fluctuations of a stochastic system. In other words, SDDEs allow for more information about the state, leading to more stable dynamics, as evidenced by a smaller variance. In contrast, Q-learning without a target network corresponds to a stochastic differential equation (SDE), which relies solely on the current state for future predictions. The absence of a delay term results in greater instability compared to SDDE-based models. This perspective offers a continuous-system interpretation of the role of the two tricks.

\subsection{Literature review and motivations}

Since Q-learning and the further breakthrough DQN were proposed, the relevant theoretical analysis of deep Q-learning algorithms has attracted attention.  
\cite{fan2020theoretical} focused on the fitted Q-iteration algorithm, which is a simplified version of DQN, with sparse ReLU networks. \cite{MR4725527} studied the global convergence of the Q-learning algorithm with an i.i.d. observation model and action-value function approximation based on a two-layer neural network. % \cite{xu2020finite} studied the non-asymptotic convergence of a neural Q-learning algorithm under non-i.i.d. observations.

The main limitation of the aforementioned work lies in the lack of analysis of the role of the original DQN algorithm, particularly regarding the mechanisms of experience replay and target networks. Some literature analyzed the experience replay mechanism of the DQN algorithm based on specific conditions. For example, \cite{szlak2021convergence} provided a convergence rate guarantee of Q-learning with experience replay in the setting of tabular. \cite{9536409} provides a theoretical analysis of a popular version of deep Q-learning with experience replay under realistic and verifiable assumptions by adopting a dynamical systems perspective. Meanwhile, some literature analyzed the target network mechanism. \cite{carvalho2020new} established the convergence of Q-learning combining target network in DQN with linear function approximation. A theoretical explanation of its two mechanisms simultaneously is still lacking. See \cite{liu2022understanding,zhang2023convergence} for more details.

For stochastic algorithms, it is natural to consider it as a discretization to a continuous dynamic for a given step size. Several studies, see \cite{li2017stochastic}, \cite{li2019stochastic}, have focused on constructing SDEs corresponding to stochastic algorithms, providing crucial insights from the perspective of continuous systems. This diffusion approximation serves as a bridge that enables the application of continuous dynamic analysis methods to investigate the properties of stochastic algorithms.  In particular, \cite{yin2008q} was the first paper which studied the $Q$-learning from the point of view of differential equations and proposed its possible connection with SDEs. Notably, in recent years, \cite{MR4409807} analyzed SVRG and its related SDDE, \cite{guillin2024error} established a quantitative error estimate between stochastic gradient descent with momentum and the underdamped Langevin diffusion. Taking these factors into account, we aim to explore the continuous-time approximation of DQN and understand the mechanisms of this algorithm from the viewpoint of stochastic differential delay dynamics.

Our main result establishes a meaningful connection between DQN and an SDDE, and provides a perspective of stochastic delay systems to understand DQN. More precisely, under some appropriate assumptions that Q-network has certain smoothness properties, we establish an error bound for the approximation. This theorem shows that the approximation error converges to $0$ as the step size $\eta$ approaches $0$. Drawing upon the principles of SDDE theory, specifically when the diffusion term incorporates a delay, dissipation arises to mitigate fluctuations originating from Brownian motion, as exemplified in \cite{scheutzow2013exponential}. This insight intuitively explains why the algorithm of DQN is effective in variance reduction and thus stable in training the associated neural networks.

\subsection{Notations and organization}
The Euclidean norm of $x \in \R^d$ and the inner product of $x, y \in \R^d$ are denoted by $|x|$ and $\langle x, y\rangle$, respectively. For matrix $A\in \R^{d\times d},\|A\|_{\mathrm{HS}}$ is the Hilbert-Schmidt norm.

The paper is organized as follows. In the second section, we construct the mathematical model of the DQN algorithm and expound on the assumptions made in this paper and their justification. In the third section, we derive the expression for the stochastic delay differential equation corresponding to the DQN iterative formula, elucidate the properties inherent in this equation, and finally state the main theorem of this paper. At last, we provide the proof for the aforementioned theorem.

\section{Background and setting}{\label{section2}}

RL can be analysed as a Markov decision process (MDP) with the tuple $(\mathcal{S}, \mathcal{A}, p, r)$, where $\mathcal{S}$ is the state space with the element state $s$, $\mathcal{A}$ is the action space with the element action $a$, $p$ is the transition probability kernel and $r$ is the immediate reward. At time $t=0,1,2,\cdots$, the agent takes action $a_t$ at the current state $s_t$, then the state transitions to $s_{t+1}$ according to the transition probability $p(\cdot|s_t,a_t)$ and receives reward $r(s_t,a_t):=r_t$. The reward $r_t$ is bounded and $\E[r(s,a)]=R(s,a)$ for any action $a$ and state $s$. The goal of RL is to find a policy $\pi$ to maximise the long-term reward $\sum_{t=0}^\infty \gamma^t r_t$ where $\gamma\in(0,1)$ is a discount parameter. Let the expectation of the long-term reward following policy $\pi$ at $s$ and $a$ be the the action-value function 
$Q(s,a)=\E^\pi[\sum_{t=0}^\infty \gamma^tr_t|s,a]$.

In practice, the observed reward feedback is often noisy in practice (e.g., when rewards are collected through sensors), making it less credible. Moreover, in applications like robotics, a deep reinforcement learning algorithm can be susceptible to manipulation, producing arbitrary errors when exposed to corrupted rewards, see \cite{wang2020reinforcement} for more details. We assume that $r(s,a)$ satisfies a normal distribution $\mathcal{N} (R(s,a), V^2(s,a))$ with variance $V^2(s,a)$, similar assumptions can be found in  \cite{biyik2023active,miyamoto2021convergence}. For the action-value function $Q(s,a)$, as $\gamma\in(0,1)$, we can see that $Q(s,a)$ is bounded by its definition.  Therefore, we introduce the following hypothesis:
\begin{assumption}\label{assum:1}
	The reward $r(s,a)\sim \mathcal{N}(R(s,a),V^2(s,a))$, where $\mathcal{N}(R(s,a),V^2(s,a))$ is a normal distribution with expectation $R(s,a)$ and standard deviation $V(s,a)$, and $R, V:\mathcal{S} \times \mathcal{A} \rightarrow \mathbb{R}$ is bounded continuous.
\end{assumption}
The optimal strategy $\pi^*$ can be obtained through the corresponding optimal action-value function $Q^*$. To find $Q^*$, \cite{watkins1992q} proposed the Q-learning algorithm, whose iteration is given by:
$$ Q(s, a) \leftarrow Q(s, a)+\eta \cdot\left[r(s, a)+\gamma \cdot \max _{a^{\prime}\in \mathcal{A}} Q\left(s^{\prime}, a^{\prime}\right)-Q(s, a)\right],$$
%Here, $r(s, a)$ is the immediate reward of taking action $a\in \mathcal{A}$ in state $s\in \mathcal{S}$, the state space $\mathcal{S}$ is a finite set of states, and $\mathcal{A}$ is a finite set of actions. $\eta$ is the learning rate, $\gamma$ is the discount factor, and 
where $\eta > 0$ is the step size satisfying the standard assumptions of nonsummability, $s^{\prime}$ is the next state that the agent takes action $a$ at state $s$. 

For better application to extensive state 
action spaces, it is natural to assume that the action-value function is a neural network that depends on the parameter $\theta$, the action-value function can be then approximated by calculating the parameter $\theta$. The corresponding algorithm is revised as:
$$\theta\leftarrow \theta+\eta \cdot\left[r(s, a)+\gamma \cdot \max _{a^{\prime}\in \mathcal{A}} Q\left(s^{\prime}, a^{\prime};\theta\right)-Q(s, a;\theta)\right]\nabla_\theta Q(s,a;\theta),$$
where the $Q(\cdot,\cdot;\theta)$ denotes the neural network with parameter $\theta$. 

On the basis of Q-learning and deep neural network, \cite{mnih2015human} introduced the DQN algorithm which employed two tricks, namely experience replay and target network. This greatly improved the effect of the algorithm and achieved breakthrough results, the DQN algorithm is shown as Algorithm \ref{algorithm} in Appendix \ref{appdix:algorithm}. 

Let us give a brief explanation of this algorithm. The experience replay is to record transitions the data $\left(s_t, a_t, r_t, s_{t}'\right)$, with $s_t^{'}$ being the state at the time $t+1$, at each time $t$ in the experience replay memory $\mathcal{M}$ and use the elements in $\mathcal{M}$ to train neural networks. This strategy significantly enhances the accuracy of gradient estimation for stochastic optimization problems.  
%{\jianya We assume in this paper that the replay buffer $\mathcal{M}$ is sufficiently large at the beginning. Although new data is added to  $\mathcal{M}$ during the iteration process, its distribution changes slowly due to the large buffer size. Given that our approximation is considered over a finite time horizon, the buffer’s distribution does not change much and therefore we assume that the samples drawn from the buffer are i.i.d., see Fan et al. (2020) for a similar assumption.}

The trick of the target network aims to obtain an unbiased estimator for the mean-squared Bellman error used in training the Q-network. It is updated as the following: given an initial $\theta_0$, let $\theta^{-}=\theta_0$, we update the neural network parameter $\theta_t$ with $1 \le t \le m$ by the following mini-batch SGD: {
	\begin{equation} \label{e:loopm}
		\ \ \ \ \ \  \ \theta_{t}=\theta_{t-1}+\eta \cdot \frac{1}{|{H}|} \sum_{i \in H_t}\left[r_i + \gamma\max_{a\in\mathcal{A}} Q(s_i',a;\theta^{-})
		-Q\left(s_i, a_i;\theta_{t-1}\right)\right] \nabla_\theta Q\left(s_i, a_i;\theta_{t-1}\right)
	\end{equation}
	where the minibatch $\left\{\left(s_i, a_i, r_i, s_i^{\prime}\right)\right\}_{i \in H_t}$, with a length $H$,  is randomly
	drawn from $\mathcal M$.} Note that the above SGD is designed to minimize
$$\ell\left(\theta\right)=\mathbb{E}_{\left(s, a, r, s^{\prime}\right) \sim U(\mathcal M)}\left[\left(r(s,a)+\gamma \max _{a^{\prime}\in \mathcal{A}} Q\left(s^{\prime}, a^{\prime} ; \theta^{-}\right)-Q\left(s, a ; \theta\right)\right)^2\right],$$
where $U(\mathcal{M})$ is the uniform distribution on $\mathcal{M}$. {After $m$ iterations, we update $\theta^{-}$ by $\theta_m$, i.e. $\theta^{-} \leftarrow \theta_m$,  continue the iteration of the next $m$-length internal loop. 
	At the $k$-th internal loop, we do $\theta^{-} \leftarrow \theta_{(k-1)m}$ and run the internal iteration as \eqref{e:loopm} with $(k-1)m+1 \le t \le k m$.} The DQN algorithm can be generally represented as 
\begin{equation} \label{e:DQN-General}
	\theta_{t}=\theta_{t-1}+\eta \cdot \frac{1}{|{H}|} \sum_{i \in H_t}\left[r_i + \gamma\max_{a\in\mathcal{A}} Q(s_i',a;\theta_{\lfloor \frac{t-1}m\rfloor m})
	-Q\left(s_i, a_i;\theta_{t}\right)\right] \nabla_\theta Q\left(s_i, a_i;\theta_{{t-1}}\right) 
\end{equation}
for $t \ge 1$, where $x \in \mathbb{R}$, $\lfloor x \rfloor$ is the largest integer less than or equal to $x$.

In the DQN algorithm, although new data is added to the replay buffer $\mathcal{M}$ during the iteration, its distribution changes very slowly due to the large buffer size. Given that our approximation is considered over a finite time horizon, the buffer’s distribution remains relatively stable. Therefore, we assume that the samples drawn from the buffer are i.i.d., see \cite{fan2020theoretical} for a similar assumption. At the same time, to capture the exploration of the algorithm and the distributional changes induced by the exploration, we add noise into the iteration, which can be interpreted as an exploration of the parameter space. See \cite{MR4279772,ishfaq2021randomized} for similar ideas. Notably, this noise can also help the algorithm escape from saddle points or local minima, see \cite{MR4095347}.

Based on the above considerations, we have the following iteration,
\begin{equation} \label{e:DQN-General-N}
	\begin{split}
		\theta_{t}=\theta_{t-1}+\eta \cdot \frac{1}{|{ H}|} \sum_{i \in H_t}\left[r_i + \gamma\max_{a\in\mathcal{A}} Q(s_i',a;\theta_{\lfloor \frac{t-1}m\rfloor m})
		-Q\left(s_i, a_i;\theta_{t}\right)\right] & \nabla_\theta Q\left(s_i, a_i;\theta_{{t-1}}\right) \\
		&+\sqrt{\eta\delta}W_t, 
	\end{split}
\end{equation}
for $t \ge 1$, where $W_t$ are i.i.d. {random variables} with the distribution $N(0, I_{d})$ with $I_d$ being the $d \times d$ identity matrix, $\delta > 0$ is the inverse temperature parameter. 

At the end of the section, we introduce the assumption for the neural network. 
\begin{assumption}\label{assum:2}
	The deep neural network of action-value $Q(s,a;\theta)$ satisfies:
	\begin{itemize}
		\item[(a)]  {$\sup _{s \in \mathcal{S}, a \in \mathcal{A}, \theta \in \mathbb R^d}|Q(s, a ; \theta)| \leq {C}$, for some ${C}>0$.}
		\item[(b)] $Q(s, a ; \theta)$ is bounded continuous differentiable from the first to the fourth order with respect to the $\theta$-coordinate for every $s \in \mathcal{S}$ and $a \in \mathcal{A}$.
		\item [(c)] {The activation functions in $Q(s, a; \theta)$ are continuous.}
	\end{itemize}
\end{assumption}

\begin{remark}{\label{remarkQ}}
	(i) As the activation functions are sigmoid functions, then the condition (b) obviously holds. \cite{fan2020theoretical} also assumes the same condition for the analysis of the fitted Q-iteration algorithm. (ii) Let us fix arbitrary $\hat{a} \in \mathcal{A}$ and $\hat{\theta} \in \mathbb{R}^d$. From Assumption (c), we know $Q(\cdot, \hat{a}; \hat{\theta})$ is composed (via addition and multiplication) of continuous functions (activation units), we get that $Q$ is continuous in the $s$-coordinate. (iii)
	$Q$, $\nabla_\theta Q$ are continuous in the $a$-coordinate, since $\mathcal{A}$ is compact metrizable as it is a finite.
\end{remark}

\section{Main Result}{\label{Result}}
In this section we will construct the SDDE based on the algorithm \eqref{e:DQN-General-N} and give our main result, which is the distance between the output of the algorithm and the SDDE solution. These stochastic dynamics offer much needed insight to the algorithms under considerations.  For the convenience of analysis, without loss of generality, we will consider the case of $H=1$ from now on.

\iffalse
In order to approximate Deep Q-network algorithm by stochastic differential delay equations (SDDEs), we have made a relatively mathematical description of the above algorithm. 

Denote by $k(n,i)$ can be regard as randomly chosen from $[H]$ and independent of each other, and the reward $r(s,a)\sim \mathcal{N}(R(s,a),V(s,a))$. The iteration of DQN weights in the algorithm can be represented as

{As the the randomly selected transitions $(s_i, a_i)$ from large reply buffer can be approximately regarded as i.i.d. distributed data, it is natural to assume that $(s_i,a_i)$  are independently sampled from the distribution $q(s, a)$. }
\fi

According to the Assumption \ref{assum:1} that the reward $r(s,a)$ satisfies normal distribution, we can rewrite \eqref{e:DQN-General-N} as follows,  
\begin{eqnarray}{\label{original}}
	\theta_{n+1}&=&\theta_{n}+\eta \left[R(s_n,a_n) + \gamma\max_{a\in\mathcal{A}} Q(s_n',a;\theta_{\lfloor \frac{n}{m} \rfloor m})
	-Q\left(s_n, a_n;\theta_n\right)\right] \cdot \nabla_{\theta} Q\left(s_n, a_n;\theta_n\right)\nonumber\\
	&&+(\eta\beta_n(\theta_n)+\sqrt{\eta\delta}I_d)W_{n+1}\\
	&:=&\theta_n-\eta b_n(\theta_n,\theta_{\lfloor \frac{n}{m} \rfloor m})+(\eta\beta_n(\theta_n)+\sqrt{\eta\delta}I_d)W_{n+1},\nonumber
\end{eqnarray}
where
\begin{eqnarray*}
	\beta_n(\theta_{n}) &=&\mathrm{diag}\left(V(s_n, a_n)\nabla_{\theta}Q(s_n,a_n; \theta_{n})\right)\\
	b_n(\theta_n,\theta_{\lfloor \frac{n}{m} \rfloor m})&=&-\left(R(s_n,a_n) + \gamma\max_{a\in\mathcal{A}} Q(s_n',a;\theta_{\lfloor \frac{n}{m} \rfloor m})
	-Q\left(s_n, a_n;\theta_n\right)\right) \cdot \nabla_{\theta} Q\left(s_n, a_n;\theta_n\right).
\end{eqnarray*}
We can further rearrange the equation above and get 
\begin{eqnarray}\label{e:dqn3}  
	\theta_{n+1}
	&=&\theta_n-\eta \E\left[b_n(\theta_n,\theta_{\lfloor \frac{n}{m} \rfloor m})|\theta_n,\theta_{\lfloor \frac{n}{m}\rfloor m}\right]\nonumber\\
	&&+\eta \E\left[b_n(\theta_n,\theta_{\lfloor \frac{n}{m} \rfloor m})|\theta_n,\theta_{\lfloor \frac{n}{m}\rfloor m}\right]-\eta b_n(\theta_n,\theta_{\lfloor \frac{n}{m} \rfloor m})+(\eta\beta_n(\theta_n)+\sqrt{\eta\delta}I_d)W_{n+1}\nonumber\\
	&:=&\theta_n-\eta \E\left[b_n(\theta_n,\theta_{\lfloor \frac{n}{m} \rfloor m})|\theta_n,\theta_{\lfloor \frac{n}{m}\rfloor m}\right]
	+\sqrt\eta\sigma_n(\theta_n,\theta_{\lfloor \frac{n}{m}\rfloor m}, W_{n+1}).
\end{eqnarray}
For the second term of right hand side of \eqref{e:dqn3}, {thanks to the experience replay trick, it is natural to assume that $(s_n, a_n, s_n^{'})$ are i.i.d. (recall $s_n^{'}$ is $s_{n+1}$) , let us denote the distribution of  $(s_n,a_n)$ by $q(s,a)$ and the transition probability of $s_n'$ by $p(s_n'|s_n,a_n)$, then we have}
\begin{eqnarray*}
	&&\E\left[b_n(\theta_n,\theta_{\lfloor \frac{n}{m} \rfloor m})|\theta_n,\theta_{\lfloor \frac{n}{m}\rfloor m}\right]\\
	&=&-\mathbb E_{(s, a)\sim q}{\left[\left(R(s, a)+\gamma \cdot \bar Q\left(s, a; \theta_{\lfloor \frac{n}{m} \rfloor\cdot m}\right)-Q\left(s, a; \theta_n\right)\right)\nabla Q_{\theta_n}\left(s, a; \theta_{n}\right)\right]}\\
	&:=& b\left(\theta_n, \theta_{\lfloor \frac{n}{m} \rfloor\cdot m}\right),
\end{eqnarray*}
where
\begin{eqnarray*}
	\bar Q\left(s, a; \theta_{\lfloor \frac{n}{m} \rfloor\cdot m}\right):=\int \max _{a^{\prime} \in \mathcal{A}} Q\left(s^{\prime}, a^{\prime} ; \theta_{\lfloor \frac{n}{m} \rfloor\cdot m}\right) p\left(\mathrm{d} s^{\prime} \mid s, a\right).
\end{eqnarray*}
For the term  $\sigma_n(\theta_n,\theta_{\lfloor \frac{n}{m}\rfloor m}, W_{n+1})$ of \eqref{e:dqn3}, it is easy to verify that
\begin{eqnarray*}
	\E\left[\sigma_n(\theta_n,\theta_{\lfloor \frac{n}{m}\rfloor m}, W_{n+1})|\theta_n,\theta_{\lfloor \frac{n}{m}\rfloor m}\right]&=&0,\\
	\mathrm{Cov}\left[\sigma_n(\theta_n,\theta_{\lfloor \frac{n}{m}\rfloor m}, W_{n+1})|\theta_n,\theta_{\lfloor \frac{n}{m}\rfloor m}\right]&=&\eta\Sigma(\theta_{n}, \theta_{\lfloor \frac{n}{m} \rfloor\cdot m})+\eta\bar \beta(\theta_{n})+\delta I_d,
\end{eqnarray*}
where 
\begin{align*}
	&\Sigma(\theta_{n}, \theta_{\lfloor \frac{n}{m} \rfloor\cdot m}) :=\mathbb E[b_n(\theta_{n}, \theta_{\lfloor \frac{n}{m} \rfloor\cdot m})-b(\theta_{n}, \theta_{\lfloor \frac{n}{m} \rfloor\cdot m})][b_n(\theta_{n}, \theta_{\lfloor \frac{n}{m} \rfloor\cdot m})-b(\theta_{n}, \theta_{\lfloor \frac{n}{m} \rfloor\cdot m})]^T\\
	&\qquad\qquad\qquad=\mathbb E\left[b_n(\theta_{n-1}, \theta_{\lfloor \frac{n}{m} \rfloor\cdot m})b_n(\theta_{n-1}, \theta_{\lfloor \frac{n}{m} \rfloor\cdot m})^T\right]-\left[b(\theta_{n-1}, \theta_{\lfloor \frac{n}{m} \rfloor\cdot m})b(\theta_{n-1}, \theta_{\lfloor \frac{n}{m} \rfloor\cdot m})^T\right],\\
	&\bar \beta(\theta_{n}) :=\mathbb E_{(s, a)\sim q}[V(s, a)\nabla_{\theta} Q\left(s,a ; \theta_{n}\right)][V(s,a)\nabla_{\theta} Q\left(s,a; \theta_{n}\right)]^T.
\end{align*}

Combining the analysis above, we can rewrite the DQN algorithm \eqref{e:DQN-General-N} as
\begin{eqnarray}\label{e:dqn4}  
	\theta_{n+1}
	&=&\theta_n-\eta b(\theta_n,\theta_{\lfloor \frac{n}{m} \rfloor m})
	+\sqrt\eta\sigma_n(\theta_n,\theta_{\lfloor \frac{n}{m}\rfloor m}, W_{n+1}),\quad n\ge0.
\end{eqnarray}
According to analysis of term $\sigma_n$, we naturally consider the SDDE
\begin{align}{\label{SDDE}}
	\mathrm{d}{X}_t=-b\left(X_t,X_{\lfloor \frac{t}{m\eta} \rfloor\cdot m\eta}\right)\mathrm{d}t+\sqrt{\eta}\sigma(X_t,X_{\lfloor \frac{t}{m\eta} \rfloor\cdot m\eta})\mathrm{d}B_t,\quad t \geq 0,
\end{align}
where $B_t$ is a standard $d$-dimensional Brownian motion and 
$$
\begin{aligned}
	\sigma\left(x, y\right)&:=\left[\Sigma(x, y)+\bar \beta(x)+\frac{\delta}{\eta}I_d\right]^{1/2}\quad \text{for any}~x,y\in\R^d.
\end{aligned}
$$

To simplify the notation, we denote 
\begin{align}{\label{external}}
	\tilde{\theta}_s=\theta_{m s}\quad\text{and}\quad\tilde{X}_s=X_{s m \eta},
\end{align}
for $s=0,1,2,\cdots$.

Under assumptions, there exists a unique solution to the SDDE (\ref{SDDE}) under Assumption \ref{assum:1} and \ref{assum:2}. From now on, we simply write a number $C_{A_1, \cdots, A_5}$, depending on $A_1, \cdots, A_5$, by $C_A$ in shorthand.

Recall that $W_1$ distance between two probability measures $\mu_1$ and $\mu_2$ is defined as
%$$
%W_1\left(\mu_1, \mu_2\right)=\inf _{(X, Y) \in \mathcal{C}\left(\mu_1, \mu_2\right)} \mathbb{E}|X-Y|,
%$$
%where $\mathcal{C}\left(\mu_1, \mu_2\right)$ is the set of all the coupling realizations of $\mu_1, \mu_2$. By a duality,
$$
W_1\left(\mu_1, \mu_2\right)=\sup _{h \in \operatorname{Lip}(1)}\left|\mu_1(h)-\mu_2(h)\right|,
$$
where $\operatorname{Lip}(1)=\left\{h: \mathbb{R}^d \rightarrow \mathbb{R} ;|h(y)-h(x)| \leq|y-x|\right\}$ and
$
\mu_i(h)=\int_{\mathbb{R}} h(x) \mu_i(\mathrm{d} x)$, $i=1,2$.

The main result of this paper is the following theorem, which provides an approximation error between the distributions of $\tilde{\theta}_s$ and $\tilde{X}_s$.
\begin{theorem}{\label{Th}}
	Assume that the Assumptions \ref{assum:1} and \ref{assum:2} hold. Choosing $0<\delta \leq 1$ and $\eta\leq \min \left\{\delta,\frac{1}{64 L},\frac{L}{8 K^2}\right\} $. Then, for any $T\in \mathbb{N}$, $T>m$, there exists a constant $C_{T,m,A,K, L, d,\beta_{max}, |b(0,0)|}$ such that
	$$
	W_1\left(\mathcal{L}\left({X}_{T\eta}\right), \mathcal{L}\left({\theta}_T\right)\right) 
	\leq C_{T,m,A,K, L,d,\beta_{max}, |b(0,0)|}{(\eta \delta)^{\frac{1}{2}}}\left(1+|\ln \eta|+\frac{\delta}{\eta^{\frac{1}{4}}}\right)\left(\mathbb{E}|{\theta}_0|^4+1\right)^{\frac{7}{4}}.
	$$
\end{theorem}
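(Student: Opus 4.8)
The plan is to bound, via Kantorovich--Rubinstein duality, the quantity $W_1(\LL(X_{T\eta}),\LL(\theta_T))=\sup_{h\in\operatorname{Lip}(1)}\abs{\E h(\theta_T)-\E h(X_{T\eta})}$ by a two-scale Lindeberg swapping argument. The decisive structural feature is that the delayed argument $\theta_{\lfloor n/m\rfloor m}$ (respectively $X_{\lfloor t/(m\eta)\rfloor m\eta}$) is frozen on each outer block, equal there to the block's own starting value; this is why it is natural to work with $\tilde\theta_s=\theta_{ms}$ and $\tilde X_s=X_{sm\eta}$ from \eqref{external}. Fixing $h\in\operatorname{Lip}(1)$ and freezing the delay at a parameter $y$, I would introduce on each block the backward Kolmogorov function $u(t,\cdot\,;y)$ of the frozen-delay flow, solving $\partial_t u+\mathcal{L}_y u=0$ with terminal data propagated from the end of the horizon. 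Since with the delay frozen the dynamics is a genuine, uniformly elliptic diffusion whose generator has diffusion matrix $\eta\Sigma+\eta\bar\beta+\delta I_d\succeq\delta I_d$, the function $u$ inherits from a merely Lipschitz $h$ the higher regularity needed below through the smoothing property of the semigroup, with ellipticity of order $\delta$.

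First I would assemble the preliminaries. On one hand, uniform-in-$n$ fourth-moment bounds $\E\abs{\theta_n}^4\le C_A(\E\abs{\theta_0}^4+1)$ and the analogue for the SDDE, obtained from Assumption~\ref{assum:2} and the Gaussian noise of Assumption~\ref{assum:1}; these, propagated through the Lipschitz dependence of $b,\sigma$ on their arguments, are the origin of the factor $(\E\abs{\theta_0}^4+1)^{7/4}$. On the other hand, derivative estimates $\norm{\nabla^j u(t,\cdot\,;y)}_\infty\lesssim(\delta\,(T\eta-t))^{-(j-1)/2}$ for $j=2,3,4$, uniform in the frozen $y$, together with the Lipschitz dependence of $u$ and its derivatives on $y$ (inherited from that of the coefficients in the delayed variable). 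These bounds are singular as $t\uparrow T\eta$ with a singularity controlled by the ellipticity~$\delta$, and it is their summation against the per-step increment moments that will produce the subleading corrections $\abs{\ln\eta}$ and $\delta\eta^{-1/4}$ attached to the leading rate.

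The core is the within-block Lindeberg swap combined with the operator comparison. Telescoping $\E h(\theta_T)-\E h(X_{T\eta})$ over the single steps inside a block writes the difference as a sum of one-step weak errors, each evaluated from a common point with a common frozen delay: the discrepancy between $\E^{\mathrm{disc}}[u|\theta_n]$ and $\E^{\mathrm{cont}}[u|\theta_n]$. Here I would Taylor-expand the smooth $u$ to fourth order and invoke the very construction of \eqref{SDDE}: the conditional mean $-\eta b$ and conditional covariance $\eta^2\Sigma+\eta^2\bar\beta+\eta\delta I_d$ (leading order $\eta\delta I_d$, which fixes the $(\eta\delta)^{1/2}$ scale) of the discrete increment match those of the SDDE increment exactly, so the first- and second-order terms cancel in the comparison of the two generators. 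What remains are the third- and fourth-order Taylor residuals, namely contractions of $\nabla^3u,\nabla^4u$ with the corresponding increment moments; bounding these by the moment estimates and the singular derivative estimates above, and summing over the steps and blocks, yields the leading $(\eta\delta)^{1/2}$ together with its logarithmic and negative-power corrections.

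Finally I would close the delay coupling and run a discrete Grönwall. Because the delay on block $s$ equals its starting value $\tilde\theta_s$ (respectively $\tilde X_s$), a discrepancy at boundary $s$ re-enters the next block through both its initial condition and its frozen delay; coupling the two processes synchronously on each block and using stability of the frozen-delay flow in these two arguments, this propagated error is controlled by a factor $1+C_A\eta$ times the boundary discrepancy, while the freshly produced term on the block is exactly the one-block weak error of the previous step. This gives a recursion $e_{s+1}\le(1+C_A\eta)e_s+r_s$ for $e_s:=W_1(\LL(\tilde X_s),\LL(\tilde\theta_s))$, and iterating over the outer blocks converts the $(1+C_A\eta)$ factors into an $\eup^{C_A T}$ constant and sums the per-block errors $r_s$ into the asserted bound. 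I expect the main obstacle to be precisely the non-Markovianity created by the delay: the backward Kolmogorov machinery does not apply to the full SDDE, so the whole argument must be organised around the frozen-delay Markov flow and the delayed-argument error propagated consistently between blocks. The second, more technical, difficulty is to extract the sharp $\delta$-dependence of the derivative estimates for Lipschitz $h$ — it is the precise interplay of these singular-in-time, $\delta$-dependent bounds with the increment moments that yields exactly $(\eta\delta)^{1/2}(1+\abs{\ln\eta}+\delta\eta^{-1/4})$ rather than a coarser rate.
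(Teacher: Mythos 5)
Your proposal follows essentially the same route as the paper: a two-level refined Lindeberg argument with the delay frozen blockwise, an inner generator comparison using Taylor expansion of the frozen-delay Kolmogorov function $u_t$ together with $\delta$-singular derivative estimates of orders two and three (the paper's Lemmas \ref{lem3} and \ref{lem4}), fourth-moment bounds producing the $(\mathbb{E}|\theta_0|^4+1)^{7/4}$ factor, and an outer blockwise propagation whose Lipschitz-stability factors sum geometrically into the $e^{CT}$ constant. The only cosmetic difference is that the paper stops the Taylor expansion at second order with an integral remainder controlled by third-derivative bounds (rather than expanding to fourth order), and the first- and second-order terms do not cancel exactly but leave drift-variation terms bounded via $\mathbb{E}|X_s^x-x|^2$; this does not change the argument.
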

\begin{remark}
	{Under the assumptions of a Q-network with certain smoothness properties, this theorem provides an error bound for the approximation, elucidating that the approximation error converges to $0$ as the step size $\eta$ approaches $0$.}
\end{remark}

\section{Primary lemmas and the proof of main Theorem }{\label{proof}}
We use two steps to prove Theorem \ref{Th}, the main method is the refined Lindeberg principle \cite{chen2022approximation,chen2023probability}. The first step is to prove an approximation error bound for the internal Markov chains $\{\theta_k\}_{m s \leq k \leq m(s+1)}$ and $\{X_t\}_{m s \eta \leq t \leq m(s+1) \eta}$ in Subsection \ref{sub1}, whereas the second step is to approximate the external Markov chain $\{\tilde{\theta}_s\}_{s \geq 0}$ by $\{\tilde{X}_{s \eta}\}_{s \geq 0}$ in Subsection \ref{sub2}.

Before giving the proof of the main theorem, we first analyze the properties of the parameters of SDDE, i.e., $b\left(x, y\right)$ and $\sigma\left(x, y\right)$, as show in Lemma \ref{property1} and \ref{property2}, which will be proved in Appendix \ref{AAA}. 

\begin{lemma}{\label{property1}}
	Under Assumption \ref{assum:1} and \ref{assum:2}, we have following properties of $b\left(x, y\right)$ and $\sigma\left(x, y\right)$, that is, (i) $b\left(x,y\right)$ is Lipschitz continuous, i.e., there exists a constant $L$, such that
	\begin{equation}{\label{Lipschitz}}
		|b\left(x_1,y_1\right)-b\left(x_2,y_2\right)|\leq L(|x_1-x_2|+|y_1-y_2|).
	\end{equation}
	(ii) There exists a constant $K$, such that
	\begin{align}{\label{growth}}
		\left\|\sigma\left(x, y\right)\right\|_{\mathrm{HS}} \leq K\left|x-y\right|+(K+\sqrt{{\beta_{max}}}+\sqrt{\frac{\delta d}{\eta}}),
	\end{align}
	where $\beta_{max}=\max_{\theta}\|\bar \beta(\theta)\|_{\mathrm{HS}}$.
\end{lemma}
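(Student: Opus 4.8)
The plan is to extract from Assumptions \ref{assum:1} and \ref{assum:2} the handful of uniform bounds that make $b$ and $\sigma$ tractable, and then dispatch the two claims by elementary estimates. I would set $C_1:=\sup_{s,a,\theta}|\nabla_\theta Q(s,a;\theta)|$ and $C_2:=\sup_{s,a,\theta}\|\nabla_\theta^2 Q(s,a;\theta)\|$, both finite by Assumption \ref{assum:2}(b), together with $|Q|\le C$ from (a) and $\sup|R|\le R_{\max}$, $\sup|V|\le V_{\max}$ from Assumption \ref{assum:1}. The only structural facts used are then: $Q(s,a;\cdot)$ is $C_1$-Lipschitz and $\nabla_\theta Q(s,a;\cdot)$ is $C_2$-Lipschitz, uniformly in $(s,a)$.

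For (i), I would write $b(x,y)=-\mathbb{E}_{(s,a)\sim q}[\,g(s,a;x,y)\,]$ with integrand
\[
g(s,a;x,y)=\bigl(R(s,a)+\gamma\bar Q(s,a;y)-Q(s,a;x)\bigr)\,\nabla_\theta Q(s,a;x),
\]
and bound the pointwise difference $g(s,a;x_1,y_1)-g(s,a;x_2,y_2)$ before taking expectations, after which Jensen transfers the estimate to $b$. Writing $F_i:=R+\gamma\bar Q(s,a;y_i)-Q(s,a;x_i)$ and using the product structure
\[
g_1-g_2=F_1\bigl(\nabla_\theta Q(\cdot;x_1)-\nabla_\theta Q(\cdot;x_2)\bigr)+(F_1-F_2)\,\nabla_\theta Q(\cdot;x_2),
\]
I control the first term by $|F_1|\le R_{\max}+(1+\gamma)C$ times $C_2|x_1-x_2|$, and the second by $C_1$ times $|F_1-F_2|$. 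The one nonroutine point is the Lipschitz continuity of $\bar Q$ in its parameter: from $|\max_{a'}f-\max_{a'}h|\le\max_{a'}|f-h|$ and then integration against the kernel $p(\cdot\mid s,a)$ I get $|\bar Q(s,a;y_1)-\bar Q(s,a;y_2)|\le C_1|y_1-y_2|$, whence $|F_1-F_2|\le C_1|x_1-x_2|+\gamma C_1|y_1-y_2|$. Collecting terms gives \eqref{Lipschitz} with $L$ any constant dominating $(R_{\max}+(1+\gamma)C)C_2+C_1^2$ and $\gamma C_1^2$.

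For (ii), the key observation is that $\Sigma(x,y)$, $\bar\beta(x)$ and $\tfrac{\delta}{\eta}I_d$ are all positive semidefinite, so for each principal square root $\|A^{1/2}\|_{\mathrm{HS}}^2=\operatorname{tr}(A)$; combined with $\sqrt{p_1^2+p_2^2+p_3^2}\le p_1+p_2+p_3$ this yields the subadditivity
\[
\|\sigma(x,y)\|_{\mathrm{HS}}\le\|\Sigma(x,y)^{1/2}\|_{\mathrm{HS}}+\|\bar\beta(x)^{1/2}\|_{\mathrm{HS}}+\Bigl\|\bigl(\tfrac{\delta}{\eta}I_d\bigr)^{1/2}\Bigr\|_{\mathrm{HS}}.
\]
The last term is exactly $\sqrt{\delta d/\eta}$, and the middle term equals $\sqrt{\operatorname{tr}\bar\beta(x)}=\sqrt{\mathbb{E}_{(s,a)}[V^2|\nabla_\theta Q|^2]}$, which I bound by $\sqrt{\beta_{max}}$. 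For the first term I use $\operatorname{tr}\Sigma(x,y)=\mathbb{E}|b_n-b|^2\le\mathbb{E}|b_n(x,y)|^2$ and isolate the $|x-y|$ dependence inside $b_n$ by writing the Bellman factor as
\[
\gamma\bigl(\max_{a}Q(s',a;y)-\max_{a}Q(s',a;x)\bigr)+\bigl(R+\gamma\max_{a}Q(s',a;x)-Q(s,a;x)\bigr),
\]
where the first bracket is $\le\gamma C_1|x-y|$ and the second is uniformly bounded; multiplying by $|\nabla_\theta Q|\le C_1$ gives $|b_n(x,y)|\le K|x-y|+K$ after enlarging $K$. Since this bound is deterministic in $(x,y)$ it survives the square root, producing $\|\Sigma(x,y)^{1/2}\|_{\mathrm{HS}}\le K|x-y|+K$ and hence \eqref{growth}.

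I expect the obstacle to be bookkeeping rather than depth: the two genuinely substantive steps are the max-over-actions Lipschitz estimate for $\bar Q$ (and its preservation under the transition kernel) in (i), and the decomposition of the Bellman factor in (ii) that cleanly extracts the linear-in-$|x-y|$ term while leaving the remainder uniformly bounded. I would also be careful that $\beta_{max}$ is invoked as a control on $\operatorname{tr}\bar\beta(x)=\mathbb{E}[V^2|\nabla_\theta Q|^2]$; up to a dimensional constant this is the quantity appearing in the statement.
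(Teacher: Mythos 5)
Your proposal is correct and follows essentially the same route as the paper: for (i) a pointwise Lipschitz bound on the integrand (using the Lipschitz continuity of $\theta\mapsto\int\max_{a'}Q(s',a';\theta)\,p(\dif s'\mid s,a)$, which the paper isolates as a separate lemma) followed by Jensen's inequality, and for (ii) the reduction $\operatorname{tr}\Sigma(x,y)\leq\mathbb{E}|b_n(x,y)|^2$ together with the linear growth bound $|b_n(x,y)|\leq K(1+|x-y|)$ obtained by splitting off the $|x-y|$-dependence of the Bellman factor. The only differences are cosmetic (which exact decomposition of the Bellman factor is used, and your slightly more careful remark that the $\bar\beta$ term is controlled by its trace rather than its Hilbert--Schmidt norm, which matches the paper up to a dimensional constant).
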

\begin{lemma}{\label{property2}}
	Under Assumption \ref{assum:1} and \ref{assum:2}. There exist constants $A_i \geq 0$ with $i=1,2, \cdots, 5$, such that for any $x, y  \in \mathbb{R}^d$ and unit vectors $v_i \in \mathbb{R}^d$, i.e., $\left|v_i\right|=1, i=1,2,3, b\left(x,y\right)$ satisfies
	\begin{equation}{\label{property2.1}}
		\left|\nabla_{1,v_2} \nabla_{1,v_1} b\left(x,y\right)\right| \leq A_1, \quad\left|\nabla_{1,v_3} \nabla_{1,v_2} \nabla_{1,v_1} b\left(x,y\right)\right| \leq A_2, 
	\end{equation}
	where $\nabla_{1,v}$ denotes the directional derivative of the first coordinate along the direction $v$; 
	and that any $x, y \in \mathbb{R}^d, \sigma$ satisfies
	\begin{align}{\label{property2.2}}
		\left\|\nabla_{1, v_1} \sigma(x, y)\right\|_{\mathrm{HS}}^2 \leq A_3, \quad &\left\|\nabla_{2, v_1} \sigma(x, y)\right\|_{\mathrm{HS}}^2 \leq A_3,\\
		\left\|\nabla_{1, v_2} \nabla_{1, v_1} \sigma(x, y)\right\|_{\mathrm{HS}}^2 \leq A_4,\quad  &\left\|\nabla_{1, v_3} \nabla_{1, v_2} \nabla_{1, v_1} \sigma(x, y)\right\|_{\mathrm{HS}}^2 \leq A_5, \nonumber
	\end{align}
	where $\nabla_{2,v}$ denotes the directional derivative of the second coordinate along the direction $v$.
\end{lemma}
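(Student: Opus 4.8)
The plan is to treat the drift $b$ and the diffusion $\sigma=\left[\Sigma(x,y)+\bar\beta(x)+\tfrac{\delta}{\eta}I_d\right]^{1/2}$ separately, since the derivative bounds on $b$ follow from differentiation under the expectation, whereas the bounds on $\sigma$ require controlling the derivatives of a matrix square root. In both cases every quantity that ultimately appears is a product of $R$, $V$, $Q$ and the $\theta$-derivatives of $Q$ up to fourth order, so the whole argument rests on Assumptions \ref{assum:1} and \ref{assum:2}.

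For the drift, recall
$b(x,y)=-\E_{(s,a)\sim q}\left[\left(R(s,a)+\gamma\bar Q(s,a;y)-Q(s,a;x)\right)\nabla_\theta Q(s,a;x)\right]$,
where the $y$-dependence sits entirely inside $\bar Q$ and hence disappears under $\nabla_1$. I would differentiate under the expectation (justified by dominated convergence using the uniform bounds of Assumption \ref{assum:2}), apply the Leibniz rule, and observe that $\nabla_{1,v_2}\nabla_{1,v_1}b$ is a finite sum of products of $Q$, $R$, $\bar Q$ and the $\theta$-derivatives of $Q$ up to third order, while $\nabla_{1,v_3}\nabla_{1,v_2}\nabla_{1,v_1}b$ involves the $\theta$-derivatives of $Q$ up to fourth order. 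Each factor is bounded — $|Q|\le C$ and $R$ bounded by Assumptions \ref{assum:2}(a) and \ref{assum:1}, $|\bar Q|\le C$ from its definition as an average of $\max_{a'}Q$, and $\|\nabla^k_\theta Q\|$ bounded for $1\le k\le4$ by Assumption \ref{assum:2}(b) — so the sums are uniformly bounded, yielding $A_1,A_2$. This is exactly where the fourth-order differentiability in Assumption \ref{assum:2}(b) is consumed.

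For the diffusion I would first record the crucial positivity: $\Sigma(x,y)$ and $\bar\beta(x)$ are covariance matrices and hence positive semidefinite, so $M(x,y):=\Sigma(x,y)+\bar\beta(x)+\tfrac{\delta}{\eta}I_d$ satisfies $M\succeq\tfrac{\delta}{\eta}I_d$, and consequently $\sigma=M^{1/2}$ has all eigenvalues $\ge\sqrt{\delta/\eta}$. Next I would bound the derivatives of $M$ — first order in both coordinates and up to third order in the first coordinate — in terms of $R$, $V$ and the $\theta$-derivatives of $Q$ up to fourth order (Leibniz applied to $b_nb_n^\top$, $bb^\top$ and $[V\nabla_\theta Q][V\nabla_\theta Q]^\top$), all of which are bounded; note that the constant term $\tfrac{\delta}{\eta}I_d$ drops out under differentiation, so no factor of $\delta/\eta$ enters $\nabla M$. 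Finally I would transfer these bounds to $\sigma$ by differentiating the identity $\sigma\sigma=M$: the first derivative solves the Sylvester equation $\sigma(\nabla\sigma)+(\nabla\sigma)\sigma=\nabla M$, and because the operator $X\mapsto\sigma X+X\sigma$ has eigenvalues $\lambda_i+\lambda_j\ge2\sqrt{\delta/\eta}$ it is invertible with the $\|\cdot\|_{\mathrm{HS}}$-operator norm of its inverse at most $\tfrac12\sqrt{\eta/\delta}$, giving $\|\nabla\sigma\|_{\mathrm{HS}}\le\tfrac12\sqrt{\eta/\delta}\,\|\nabla M\|_{\mathrm{HS}}$. Differentiating the identity twice and three times produces the same Sylvester equation with right-hand sides $\nabla^2M$ and $\nabla^3M$ corrected by symmetrized products of lower-order derivatives of $\sigma$; solving recursively and using $\eta\le\delta$ so that $\sqrt{\eta/\delta}\le1$ keeps every constant uniform, which yields $A_3,A_4,A_5$ (and in particular the squared-norm bounds as stated).

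The main obstacle is this last step, namely controlling the derivatives of the matrix square root $M^{1/2}$. The non-degeneracy injected by the $\tfrac{\delta}{\eta}I_d$ term is essential: it provides the uniform spectral gap that makes the Sylvester operator boundedly invertible and prevents the square-root derivatives from blowing up. Care is needed to track how each differentiation of $\sigma\sigma=M$ generates products of lower-order derivatives of $\sigma$ on the right-hand side, and to verify that the accumulated powers of $\sqrt{\eta/\delta}$ stay bounded under $\eta\le\delta$; the alternative Daleckii–Krein integral representation of the differential of $M^{1/2}$ would give the same estimates, but I expect the recursive Sylvester computation to be cleaner to write out.
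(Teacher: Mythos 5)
Your argument is correct, and on the drift part it coincides with the paper's: differentiate under the expectation, note that the $y$-dependence sits only in $\bar Q$ and so vanishes under $\nabla_1$, and bound the resulting products of $R$, $\bar Q$, $Q$ and $\nabla_\theta^k Q$ ($k\le 4$) using Assumptions \ref{assum:1} and \ref{assum:2}. Where you genuinely diverge is the diffusion part: the paper stops at observing that $\Sigma$ and $\bar\beta$ are bounded with bounded derivatives of the relevant orders and then simply asserts that \eqref{property2.2} follows, whereas you actually carry out the transfer from $M=\Sigma+\bar\beta+\tfrac{\delta}{\eta}I_d$ to $\sigma=M^{1/2}$ by differentiating $\sigma\sigma=M$ and inverting the Sylvester operator $X\mapsto\sigma X+X\sigma$, whose spectral floor $2\sqrt{\delta/\eta}$ comes precisely from the positive-definiteness injected by the $\tfrac{\delta}{\eta}I_d$ term; your bookkeeping of the accumulated $\sqrt{\eta/\delta}$ factors under $\eta\le\delta$ is exactly the point the paper leaves implicit, and it also makes visible that the constants $A_3,A_4,A_5$ are uniform only under that restriction (which is harmless, since every later use of the lemma assumes $\eta\le\delta$). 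Your version therefore buys a complete justification of the hardest step at the cost of some extra computation; the paper's buys brevity by citation-free assertion. One caveat you share with the paper: the bound on $\nabla_{2,v_1}\sigma$ requires differentiating $\max_{a'}Q(s',a';y)$ in $y$, which for a finite action set is only guaranteed almost everywhere (it is Lipschitz with the gradient bound you need wherever it exists), so strictly speaking both proofs should phrase the $y$-derivative estimate in that weak sense.
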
 

\begin{lemma}{\label{Markov}}
	(i) Both $(\tilde{\theta}_s)_{s \in \mathbb{Z}^{+}}$ and $(\tilde{X}_s)_{s \in \mathbb{Z}^{+}}$ are Markov chains; 
	(ii) An internal iteration of DQN $\left\{\theta_k\right\}_{0 \leq k \leq m}$ and the solution $\left(X_t\right)_{t \in[0, m \eta]}$ of SDDE (\ref{SDDE}) are time homogeneous Markov chains with states on $\mathbb{R}^d$.
\end{lemma}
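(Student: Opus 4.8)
The plan is to exploit the block structure created by the delay: on the window of the $s$-th inner loop the delayed argument is frozen at the block's left-endpoint value. Indeed, for any integer $n$ with $sm\le n<(s+1)m$ one has $\lfloor n/m\rfloor=s$, so $\theta_{\lfloor n/m\rfloor m}=\theta_{sm}=\tilde\theta_s$; likewise for $t\in[sm\eta,(s+1)m\eta)$ we have $\lfloor t/(m\eta)\rfloor=s$ and $X_{\lfloor t/(m\eta)\rfloor m\eta}=X_{sm\eta}=\tilde X_s$. Thus inside one block both the recursion and the SDDE see their delay argument as a constant vector, namely the state at the start of that block. This ``freezing'' is the mechanism behind both claims.

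For (ii) I would first fix a deterministic starting value and treat the inner dynamics. Writing $\theta_0=x$, the recursion \eqref{original} on $0\le k<m$ becomes $\theta_{k+1}=\theta_k-\eta b_k(\theta_k,x)+(\eta\beta_k(\theta_k)+\sqrt{\eta\delta}I_d)W_{k+1}=:\Psi(\theta_k,\xi_k)$, where $x$ enters only as a fixed parameter, $\Psi$ is a single measurable map, and $\xi_k:=(s_k,a_k,s_k',W_{k+1})$ are i.i.d.\ and independent of $(\theta_0,\dots,\theta_k)$. The standard random-map criterion then shows $\{\theta_k\}_{0\le k\le m}$ is a time-homogeneous Markov chain on $\mathbb R^d$. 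For the SDDE, freezing the delay at $X_0=x$ turns \eqref{SDDE} on $[0,m\eta]$ into the autonomous It\^o equation $\mathrm dX_t=-b(X_t,x)\,\mathrm dt+\sqrt\eta\,\sigma(X_t,x)\,\mathrm dB_t$; by Lemma \ref{property1} its coefficients are Lipschitz in the running variable, so a unique strong solution exists and the classical flow/uniqueness argument gives that it is a time-homogeneous Markov process. (In both cases the transition kernel carries $x$ as a parameter, which is exactly the block-start value, so the statement is understood for a chain started from, or conditioned on, a given initial point.)

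For (i) I would exhibit each block as a measurable function of its starting state and of fresh, independent randomness. Composing the inner recursion over $sm\le k<(s+1)m$ yields $\tilde\theta_{s+1}=\Phi(\tilde\theta_s,\zeta_s)$, where $\zeta_s:=(\xi_k)_{sm\le k<(s+1)m}$ and $\Phi$ is one fixed map for every $s$ (because the delay argument throughout block $s$ is exactly $\tilde\theta_s$). The families $\{\zeta_s\}_s$ are i.i.d., and $\zeta_s$ is independent of $\sigma(\theta_0,\zeta_0,\dots,\zeta_{s-1})\supseteq\sigma(\tilde\theta_0,\dots,\tilde\theta_s)$; hence $\mathbb E[f(\tilde\theta_{s+1})\mid\tilde\theta_0,\dots,\tilde\theta_s]=\mathbb E[f(\tilde\theta_{s+1})\mid\tilde\theta_s]$ and $(\tilde\theta_s)$ is Markov. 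The continuous case is analogous: on $[sm\eta,(s+1)m\eta]$ the solution of \eqref{SDDE} solves the autonomous SDE above with $x$ replaced by $\tilde X_s$, driven by the increments $(B_t-B_{sm\eta})$ that are independent of $\mathcal F_{sm\eta}\supseteq\sigma(\tilde X_0,\dots,\tilde X_s)$, so strong uniqueness writes $\tilde X_{s+1}$ as a fixed measurable functional of $\tilde X_s$ and those increments, giving the Markov property of $(\tilde X_s)$.

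The main obstacle is the bookkeeping in the continuous-time statements: one must rigorously justify that, block by block, the SDDE reduces to an autonomous SDE with the delay frozen at the left endpoint, invoke strong existence and uniqueness (which rests on the Lipschitz bounds of Lemma \ref{property1}), and combine measurability of the It\^o solution map with the independence of Brownian increments to produce the decomposition $\tilde X_{s+1}=\Phi(\tilde X_s,\text{fresh noise})$. Once this ``freeze the delay, feed in independent fresh randomness'' decomposition is in place, both Markov properties follow from the random-map/flow criterion, and the remaining discrete assertions are a routine specialization.
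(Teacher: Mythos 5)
Your proposal is correct and follows essentially the same route as the paper: freeze the delay argument at the block's left endpoint, so that each inner block is an autonomous recursion/SDE driven by fresh independent randomness, and read off both the time-homogeneous Markov property of the inner dynamics and the Markov property of the block-sampled chains. Your write-up merely makes explicit (via the random-map/flow criterion) what the paper asserts in one line, namely that the distribution of the next block-endpoint is determined by the current one together with i.i.d.\ noise.
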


We denote $X_{s, t}^x$ with $s \leq t \in[0, \eta]$ to stress the dependence of process on the value $X_s=x$. For the simplicity of notations, we denote $X_{s, t}^x$ by $X_{t-s}^x$ according to time homogeneous property. $\theta_k^x$ is denoted by same way.
\subsection{Approximation of internal Markov chain.}{\label{sub1}}
Let $W \sim \mathcal{N}\left(0, I_d\right)$, which is independent of $I$. The infinitesimal generators of $\left\{\theta_k\right\}_{0 \leq k \leq m}$ and $(X_t)_{t \in[0, m \eta]}$ are respectively
\begin{align}{\label{generate1}}
	\mathcal{A}_j^\theta f(x) &=\mathbb{E}\left[f\left(\theta_{j+1}\right) \mid \theta_j=x\right]-f(x)\\
	&=\mathbb{E}\left[f\left(x-\eta\left[b_n\left(x, \theta_0\right)\right]+(\eta{\beta_I}(x)+\sqrt{\eta\delta}I_d)W\right)\right]-f(x)\nonumber
\end{align}
for $j=0,1,2, \cdots, m-1$, and
\begin{align}{\label{generate2}}
	\mathcal{A}_t^X f(x) & =\lim _{\Delta t \rightarrow 0+} \frac{\mathbb{E}\left[f\left(X_{t+\Delta t}\right) \mid X_t=x\right]-f(x)}{\Delta t} \\
	& =\frac{1}{2} \eta\left\langle \sigma(x,\theta_0)^2, \nabla^2 f(x)\right\rangle_{\mathrm{HS}}-\langle b\left(x,\theta_0\right), \nabla f(x)\rangle \nonumber\\
	& =\frac{1}{2}\left\langle\eta\Sigma(x, \theta_0)+\eta\bar \beta(x) +{\delta}I_d, \nabla^2 f(x)\right\rangle_{\mathrm{HS}}-\langle b\left(x,\theta_0\right), \nabla f(x)\rangle\nonumber
\end{align}
for $t \in[0, m \eta)$. The generators of these two processes do not depend on the time due to time homogeneous property, we shall simply write
\begin{equation}
	\mathcal{A}^X=\mathcal{A}_t^X, \quad \mathcal{A}^\theta=\mathcal{A}_j^\theta .
\end{equation}

Since the diffusion coefficient of SDDE (\ref{SDDE}) is positive definite, by Lemma \ref{property1} (i) and \ref{property2}, we have the following estimates, which will be proved in Appendix \ref{B}.
\begin{lemma}{\label{lem3}}
	Let $X_t$ be the solution to the SDDE (\ref{SDDE}) and denote $P_t h(x)=\mathbb{E}\left[h\left(X_t^x\right)\right]$ for $h \in \operatorname{Lip}(1)$. Then, for any $x \in \mathbb{R}^d$ and unit vectors $v, v_1, v_2, v_3 \in \mathbb{R}^d$, as $\eta \in(0, \delta]$ and $t \in(0,m\eta]$, we have
	\begin{align}{\label{P1}}
		\left|\nabla_{v_1} \left(P_t h\right)(x)\right| \leq e^{m(L+4)},
	\end{align}
	\begin{align}{\label{P2}}
		\left|\nabla_{v_2} \nabla_{v_1}\left(P_t h\right)(x)\right| \leq C_{A,m, L, d} \frac{1}{\sqrt{\delta t}},
	\end{align}
	and
	\begin{align}{\label{P3}}
		\left|\nabla_{v_3} \nabla_{v_2} \nabla_{v_1} P_t h(x)\right| \leq C_{A,m, L, d}\left(1+\frac{1}{\delta t}+\frac{1}{t^{\frac{5}{4}}}\right).
	\end{align}
\end{lemma}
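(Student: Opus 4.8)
The plan is to read the three bounds as smoothing (regularization) estimates for the Markov semigroup $P_t$ of the SDDE, and to prove them by combining a variation-process (tangent-flow) argument for the first derivative with Bismut--Elworthy--Li type integration-by-parts formulae for the second and third derivatives. The crucial structural observation is that on the internal interval $t\in(0,m\eta]$ the delay argument $X_{\lfloor t/(m\eta)\rfloor m\eta}$ is frozen at the initial value $X_0=x$, so \eqref{SDDE} reduces to the genuine time-homogeneous It\^o SDE $\dif X_t=-b(X_t,x)\dif t+\sqrt\eta\,\sigma(X_t,x)\dif B_t$ with the second coordinate held fixed; this is what lets me apply standard flow-differentiation and Malliavin techniques. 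The second structural fact I would exploit repeatedly is the uniform non-degeneracy $\eta\sigma(x,y)^2\geq\delta I_d$ coming from the $\tfrac\delta\eta I_d$ term inside $\sigma$, so that the diffusion coefficient $\sqrt\eta\,\sigma$ is invertible with $\|(\sqrt\eta\,\sigma)^{-1}\|\leq\delta^{-1/2}$; and the companion remark that, since $\sigma^2$ differs from the large constant matrix $\tfrac\delta\eta I_d$ only by the bounded terms $\Sigma+\bar\beta$, its derivatives are small, of order $\sqrt{\eta/\delta}\leq1$, which is why the first-variation growth in \eqref{P1} keeps a pure constant in the exponent.

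For \eqref{P1} I would introduce the first variation process $J_t=\nabla_x X_t^x$, which solves the linear SDE obtained by differentiating the frozen equation in $x$ (hitting both the initial condition and the frozen second coordinate), and write, by the chain rule, $\nabla_{v_1}(P_t h)(x)=\E[\langle\nabla h(X_t^x),J_t v_1\rangle]$. Since $h\in\mathrm{Lip}(1)$ gives $|\nabla h|\leq1$, it suffices to bound $\E|J_t v_1|$; applying It\^o's formula to $|J_t v_1|^2$, controlling the drift contribution by the Lipschitz constant $L$ from Lemma \ref{property1} and the quadratic-variation contribution by the (small) $\sigma$-derivative bounds $A_3$ of Lemma \ref{property2} together with $\eta\leq1$, and invoking Gronwall over the horizon $t\leq m\eta\leq m$, yields the exponential $e^{m(L+4)}$.

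For \eqref{P2} and \eqref{P3} the test function is only Lipschitz, so a second or third derivative cannot be passed directly onto $h$; instead I would transfer derivatives onto the noise through the Bismut--Elworthy--Li formula, whose Malliavin weight is built from $\tfrac1t\int_0^t\langle(\sqrt\eta\,\sigma)^{-1}J_s v,\dif B_s\rangle$. Its $L^2$ norm is controlled by $t^{-1}(\int_0^t\delta^{-1}\E|J_s v|^2\,\dif s)^{1/2}\lesssim(\delta t)^{-1/2}$, which produces the factor $(\delta t)^{-1/2}$ in \eqref{P2}. For \eqref{P3} I would iterate this mechanism, most cleanly by splitting $P_t=P_{t/2}\circ P_{t/2}$: applying \eqref{P1}--\eqref{P2} to the inner factor shows $g:=P_{t/2}h$ is $C^2$ with $|\nabla g|$ a constant and $|\nabla^2 g|\lesssim(\delta t)^{-1/2}$, and then one further integration by parts against the outer factor, combined with moment estimates for the first and second variation processes $J_t$ and $\nabla_x^2 X_t^x$, yields the singular term $\tfrac1{\delta t}$ (two derivative transfers) together with lower-order pieces collected into $1+t^{-5/4}$.

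I expect the main obstacle to be the third-derivative estimate \eqref{P3}: keeping the Malliavin weights and the second variation process under control with exactly the right powers of $\delta$ and $t$, uniformly in $\eta,\delta$ under the constraints $\eta\leq\min\{\delta,\ldots\}$, is delicate, and in particular reproducing the precise exponent $t^{-5/4}$ requires a careful $L^p$ interpolation balancing the $(\delta t)^{-1/2}$ cost of each integration by parts against the time-moments of $J_t$ and $\nabla_x^2 X_t^x$, rather than the naive $t^{-3/2}$ one would get from transferring all three derivatives at once. A second point needing care is verifying that the $\sigma$-derivative bounds genuinely contribute only $O(1)$, so that no hidden $\eta^{-1}$ or $\delta^{-1}$ creeps into the constants $C_{A,m,L,d}$.
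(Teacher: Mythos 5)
Your approach is essentially the paper's: the paper likewise freezes the delay coordinate to reduce (\ref{SDDE}) on $[0,m\eta]$ to a genuine uniformly elliptic SDE, verifies the derivative bounds of Lemmas \ref{property1}--\ref{property2} together with the lower bound $\sigma\sigma^T\geq\frac{\delta}{\eta}I_d$, and then invokes Lemma 3.3 of \cite{chen2022approximation}, whose proof is exactly the first-variation/Bismut--Elworthy--Li machinery you sketch (including the $(\delta t)^{-1/2}$ cost per derivative transfer). One small correction: the frozen second argument is $\theta_0$, not the spatial variable $x$ at which $P_th$ is differentiated, so $\nabla_x$ acts only on the starting point and never hits the second coordinate of $b$ or $\sigma$; your variant, in which the derivative also hits the frozen coordinate, would require mixed higher-order derivative bounds in the delay variable that Lemma \ref{property2} does not supply and that the lemma does not need.
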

Now, by Lemma \ref{property1}, we can give some moment estimates of SDDE and DQN in Lemma \ref{lem3.1}, \ref{lem3.2}.
\begin{lemma}{\label{lem3.1}}
	Let $X_t$ be the solution to the equation (\ref{SDDE}), $t\leq m\eta$ and $\eta<\frac{L}{8 K^2}$. Then, we have
	\begin{align}{\label{lem3.1.1}}
		\mathbb{E}\left|X_t^x\right|^2 \leq C_{K, L, m,d,\beta_{max}, |b(0,0)|}\left(1+|x|^2+\mathbb{E}\left|\theta_0\right|^2+\delta\right).
	\end{align}
	and
	\begin{align}{\label{lem3.1.2}}
		\mathbb{E}\left|X_t^x-x\right|^2 \leq C_{K, L,m, d,\beta_{max}, |b(0,0)|}\left(1+|x|^2+\mathbb{E}\left|\theta_0\right|^2+\delta\right) t(t+\eta+\delta).
	\end{align}
\end{lemma}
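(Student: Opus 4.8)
The plan is to treat the SDDE \eqref{SDDE} on the interval $[0,m\eta]$ as an ordinary (non-delayed) It\^o SDE: for $t\le m\eta$ the delayed argument $X_{\lfloor t/(m\eta)\rfloor m\eta}$ is frozen at the loop-initial value $\theta_0$, so that $X_t^x$ solves $\mathrm{d}X_t=-b(X_t,\theta_0)\,\mathrm{d}t+\sqrt{\eta}\,\sigma(X_t,\theta_0)\,\mathrm{d}B_t$ with $X_0=x$. Both estimates then follow from the Lipschitz and linear-growth bounds of Lemma \ref{property1} combined with Gronwall's inequality, the only delicate issue being to keep every constant independent of $\eta$.

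For \eqref{lem3.1.1} I would apply It\^o's formula to $f(X_t)=|X_t|^2$, obtaining after taking expectations
\[
\frac{\mathrm{d}}{\mathrm{d}t}\mathbb{E}|X_t|^2 = -2\,\mathbb{E}\langle X_t,b(X_t,\theta_0)\rangle + \eta\,\mathbb{E}\|\sigma(X_t,\theta_0)\|_{\mathrm{HS}}^2 .
\]
The Lipschitz property of $b$ yields the linear-growth bound $|b(x,y)|\le|b(0,0)|+L(|x|+|y|)$, so by Young's inequality the drift term is at most $(2L+1)\mathbb{E}|X_t|^2+C(1+\mathbb{E}|\theta_0|^2)$. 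Squaring the growth bound \eqref{growth} gives $\eta\,\mathbb{E}\|\sigma(X_t,\theta_0)\|_{\mathrm{HS}}^2\le 4\eta K^2\,\mathbb{E}|X_t|^2+C\eta(1+\mathbb{E}|\theta_0|^2+\beta_{max})+C\delta d$, where the additive constant $\sqrt{\delta d/\eta}$ in \eqref{growth} produces the $\delta$-dependence after multiplication by $\eta$, since $\eta\cdot(\delta d/\eta)=\delta d$. Collecting terms gives $\tfrac{\mathrm{d}}{\mathrm{d}t}\mathbb{E}|X_t|^2\le(2L+1+4\eta K^2)\mathbb{E}|X_t|^2+C_A(1+\mathbb{E}|\theta_0|^2+\delta)$ with $\mathbb{E}|X_0|^2=|x|^2$; since $\eta<L/(8K^2)$ the growth rate is bounded by $\tfrac{5L}{2}+1$, so Gronwall over $t\le m\eta$ produces a bounded prefactor $C_{L,m}$ and yields \eqref{lem3.1.1}, the term $|x|^2$ entering through the initial datum.

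For \eqref{lem3.1.2} I would write $X_t^x-x=-\int_0^t b(X_s,\theta_0)\,\mathrm{d}s+\sqrt{\eta}\int_0^t\sigma(X_s,\theta_0)\,\mathrm{d}B_s$ and estimate the two contributions separately. Cauchy--Schwarz bounds the squared drift part by $t\int_0^t\mathbb{E}|b(X_s,\theta_0)|^2\,\mathrm{d}s$, while It\^o's isometry bounds the squared martingale part by $\eta\int_0^t\mathbb{E}\|\sigma(X_s,\theta_0)\|_{\mathrm{HS}}^2\,\mathrm{d}s$. Inserting the second-moment bound \eqref{lem3.1.1} into the linear-growth estimates for $b$ and $\sigma$ makes both integrands at most $C_A(1+|x|^2+\mathbb{E}|\theta_0|^2+\delta)$ apart from the $\delta d/\eta$ piece of $\|\sigma\|_{\mathrm{HS}}^2$; the drift then contributes a factor $t^2$, while the diffusion contributes $\eta t$ from the bounded part and $\delta t$ from the $\sqrt{\delta d/\eta}$ part (again using $\eta\cdot(\delta/\eta)=\delta$). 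Together these yield the factor $t(t+\eta+\delta)$ and establish \eqref{lem3.1.2}.

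The routine ingredients---It\^o's formula, Young's and Cauchy--Schwarz inequalities, and Gronwall---are standard, so the part that genuinely needs care is the bookkeeping of the $\eta$ and $\delta$ scales. Because the diffusion coefficient carries the additive constant $\sqrt{\delta d/\eta}$, a naive growth estimate would blow up as $\eta\to0$; what saves the argument is that this constant always appears multiplied by the $\sqrt{\eta}$ in front of $\sigma$, so after squaring it reorganizes into a clean $\delta$-term and keeps every constant independent of $\eta$. A secondary technical point is to justify a priori that $\mathbb{E}|X_t|^2<\infty$ before differentiating in $t$; I would handle this by truncating with the stopping times $\tau_N=\inf\{t:|X_t|\ge N\}$, deriving $N$-uniform bounds, and passing to the limit via monotone convergence (or Fatou's lemma).
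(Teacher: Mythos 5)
Your proposal is correct and follows essentially the same route as the paper: both parts reduce the SDDE on $[0,m\eta]$ to an SDE with the delayed argument frozen at $\theta_0$, prove \eqref{lem3.1.1} via It\^o's formula on $|X_t|^2$, Young's inequality, the condition $\eta<L/(8K^2)$ and Gronwall, and prove \eqref{lem3.1.2} via Cauchy--Schwarz, It\^o's isometry and the first bound, with the same cancellation $\eta\cdot(\delta d/\eta)=\delta d$ handling the diffusion constant. Your added remark on justifying $\mathbb{E}|X_t|^2<\infty$ by localization is a sound technical point the paper leaves implicit.
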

\begin{lemma}{\label{lem3.2}}
	Let $\theta_n^x$ be defined in (\ref{original}), $\delta\leq 1$ and $\eta \leq \min \left\{1,\frac{1}{64 L}\right\}$. Then, for any $0 \leq n \leq m$, we have
	\begin{align}{\label{lem3.2.1}}
		\mathbb{E}\left|\theta_n^x\right|^4 \leq C_{L,m, d,\beta_{max}, |b(0,0)|}\left(1+|x|^4+\mathbb{E}\left|\theta_0\right|^4\right).
	\end{align}
\end{lemma}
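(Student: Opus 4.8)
The plan is to derive a one-step recursion for the conditional fourth moment $\E[\,|\theta_{n+1}|^4\mid\mathcal F_n]$ and then close it by a discrete Grönwall argument over the at most $m$ internal steps (which form a time-homogeneous Markov chain by Lemma \ref{Markov}(ii)). The structural facts I would exploit are: by Assumption \ref{assum:2}(a)--(b) together with Assumption \ref{assum:1}, the random coefficients $b_n(\theta_n,\theta_0)$ and $\beta_n(\theta_n)$ are uniformly bounded, since $R$, $V$, $Q$ and $\nabla_\theta Q$ are all bounded; the conditional mean of the random drift equals the averaged drift, $\E[b_n(\theta_n,\theta_0)\mid\mathcal F_n]=b(\theta_n,\theta_0)$, which is Lipschitz by Lemma \ref{property1}(i), so $|b(\theta_n,\theta_0)|\le|b(0,0)|+L(|\theta_n|+|\theta_0|)$; and the injected noise $W_{n+1}\sim\mathcal N(0,I_d)$ is independent of $\mathcal F_n$ and of the sample $(s_n,a_n,s_n')$, with $\E[W_{n+1}]=0$ and $\E|W_{n+1}|^4=d(d+2)$.

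Writing the increment as $\Delta_n:=\theta_{n+1}-\theta_n=-\eta b_n(\theta_n,\theta_0)+(\eta\beta_n(\theta_n)+\sqrt{\eta\delta}I_d)W_{n+1}$, I would expand $|\theta_{n+1}|^4=\bigl(|\theta_n|^2+2\langle\theta_n,\Delta_n\rangle+|\Delta_n|^2\bigr)^2$ and apply $\E[\cdot\mid\mathcal F_n]$. The terms linear in $W_{n+1}$ vanish by mean-zero and independence, and I would organize the survivors by their power of $|\theta_n|$. The leading term is $|\theta_n|^4$. The cross contribution $2|\theta_n|^2\,\E[2\langle\theta_n,\Delta_n\rangle\mid\mathcal F_n]=-4\eta|\theta_n|^2\langle\theta_n,b(\theta_n,\theta_0)\rangle$ is bounded via the Lipschitz estimate by $C\eta(|\theta_n|^4+|\theta_n|^3+|\theta_n|^3|\theta_0|)$. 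Using $\E[|\Delta_n|^2\mid\mathcal F_n]\le C\eta$ (the $\sqrt{\eta\delta}I_d$ part contributes $\eta\delta d\le\eta d$ since $\delta\le1$, and the bounded $\eta\beta_n$ part is $O(\eta^2)$) and $\E[|\Delta_n|^4\mid\mathcal F_n]\le C\eta^2$ (from the Gaussian fourth moment), the remaining terms are controlled by $C\eta(|\theta_n|^2+1)$, $C\eta^{3/2}(|\theta_n|+1)$ and $C\eta^2$.

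I would then apply Young's inequality repeatedly — $|\theta_n|^3\le|\theta_n|^4+1$, $|\theta_n|^3|\theta_0|\le\tfrac34|\theta_n|^4+\tfrac14|\theta_0|^4$, $|\theta_n|^2\le|\theta_n|^4+1$, $|\theta_n|\le|\theta_n|^4+1$ — to absorb every subleading term into the form $C\eta|\theta_n|^4+C\eta(1+|\theta_0|^4)$, where the step-size restrictions $\eta\le\min\{1,1/(64L)\}$ and $\delta\le1$ keep the growth factor controlled and ensure the $O(\eta)$ terms dominate the $O(\eta^{3/2})$ and $O(\eta^2)$ ones. This yields $\E[|\theta_{n+1}|^4\mid\mathcal F_n]\le(1+C\eta)|\theta_n|^4+C\eta(1+|\theta_0|^4)$ with $C=C_{L,d,\beta_{max},|b(0,0)|}$. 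Taking total expectations and iterating this affine recursion over $0\le n\le m$, summing the geometric series, gives $\E|\theta_n^x|^4\le(1+C\eta)^n\bigl(|x|^4+1+\E|\theta_0|^4\bigr)$; since $n\le m$ and $\eta\le1$ we have $(1+C\eta)^n\le e^{C\eta n}\le e^{Cm}$, so the bound collapses to $C_{L,m,d,\beta_{max},|b(0,0)|}(1+|x|^4+\E|\theta_0|^4)$, as claimed.

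The computation is essentially routine, so the point requiring the most care is the bookkeeping that guarantees the per-step growth factor is $1+O(\eta)$ rather than $1+O(1)$: every contribution other than $|\theta_n|^4$ itself must be checked to carry at least one power of $\eta$ before Young's inequality is invoked. This hinges on using the independence of $W_{n+1}$ to eliminate odd moments and on the uniform boundedness of $b_n$ and $\beta_n$, which prevents the drift and diffusion coefficients from injecting any $\theta$-dependence of order $\eta^0$. Only with this structure does the Grönwall step produce a constant that depends on $m$ but is independent of $\eta$, which is precisely what the internal-chain approximation in Subsection \ref{sub1} will require.
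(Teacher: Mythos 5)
Your proposal is correct and follows essentially the same route as the paper: expand the fourth power of the one-step increment, use independence of $W_{n+1}$ to kill the odd terms, control the survivors via the Lipschitz/linear-growth bounds on $b$ and the boundedness of $\beta$ together with Young's inequality, and arrive at the affine recursion $\mathbb{E}|\theta_n|^4\le(1+C\eta)\mathbb{E}|\theta_{n-1}|^4+C\eta(1+\mathbb{E}|\theta_0|^4)$, closed by a discrete Gr\"onwall step over $n\le m$. The only cosmetic difference is that you invoke the uniform boundedness of $b_n$ directly, whereas the paper works with the linear-growth bound $|b_n(x,y)|\le|b(0,0)|+L(|x|+|y|)$; both yield the same per-step factor $1+O(\eta)$ and the same final constant.
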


Moreover, we can use Lemma \ref{lem3} and Lemma \ref{lem3.1} to prove following lemma.
\begin{lemma}{\label{lem4}}
	Let $Z_t=X_{\eta t}, \mathcal{A}^Z$ be the infinitesimal generator. Let $\mathcal{A}^\theta$ be defined by (\ref{generate1}) and $u_t(x)=\mathbb{E} h\left(X_t^x\right)$ for $0 \leq k \leq m$. Then, as $\eta \leq \min \left\{\delta, \frac{L}{8 K^2}\right\}$, $\delta\leq 1$ and $t \in(0,m\eta]$, we have
	\begin{align*}
		& \left|\mathbb{E} \int_0^1\left[\mathcal{A}^Z u_t\left(Z_s^x\right)-\mathcal{A}^\theta u_t(x)\right] \mathrm{d} s\right| \\
		\leq & C_{A,K, L,m, d,\beta_{max}, |b(0,0)|}\left(1+\frac{1}{t}+\frac{\delta}{t^{\frac{5}{4}}}\right)\left(1+\mathbb{E}\left|\theta_0\right|^4\right)\left(1+|x|^3\right) \eta^{\frac{3}{2}} \delta^{\frac{1}{2}}.
	\end{align*}
\end{lemma}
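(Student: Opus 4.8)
The plan is to read the estimate as a single one-step comparison between the generator of the time-rescaled diffusion and the one-step generator of the DQN chain, evaluated on the smoothed test function $u_t=P_th$; this is precisely where the refined Lindeberg principle enters. Since $Z_t=X_{\eta t}$, the time change gives $\mathcal{A}^Z=\eta\,\mathcal{A}^X$ with $\mathcal{A}^X$ as in \eqref{generate2}. Because $\int_0^1\mathrm{d}s=1$, I would split
$$\mathbb{E}\int_0^1\left[\mathcal{A}^Z u_t(Z_s^x)-\mathcal{A}^\theta u_t(x)\right]\mathrm{d}s=\underbrace{\mathbb{E}\int_0^1\left[\mathcal{A}^Z u_t(Z_s^x)-\mathcal{A}^Z u_t(x)\right]\mathrm{d}s}_{(\mathrm{I})}+\underbrace{\left[\mathcal{A}^Z u_t(x)-\mathcal{A}^\theta u_t(x)\right]}_{(\mathrm{II})},$$
and bound the frozen-coefficient comparison $(\mathrm{II})$ and the coefficient-movement term $(\mathrm{I})$ separately.

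For $(\mathrm{II})$ I would Taylor-expand $\mathcal{A}^\theta u_t(x)=\mathbb{E}[u_t(x+\Delta)]-u_t(x)$, with increment $\Delta=-\eta b_n(x,\theta_0)+(\eta\beta_I(x)+\sqrt{\eta\delta}I_d)W$, to second order using the integral form of the remainder,
$$\mathbb{E}u_t(x+\Delta)-u_t(x)=\langle\nabla u_t(x),\mathbb{E}\Delta\rangle+\tfrac12\langle\nabla^2u_t(x),\mathbb{E}[\Delta\Delta^T]\rangle+\tfrac12\mathbb{E}\int_0^1(1-r)^2\,\nabla^3u_t(x+r\Delta)[\Delta,\Delta,\Delta]\,\mathrm{d}r.$$
The first-order term matches the drift of $\mathcal{A}^Z$ exactly, since $\mathbb{E}\Delta=-\eta b(x,\theta_0)$ by $\mathbb{E}b_n=b$ and independence of $W$. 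The second-order term matches the diffusion part $\tfrac{\eta}{2}\langle\eta\Sigma+\eta\bar\beta+\delta I_d,\nabla^2u_t(x)\rangle$ up to the discrepancy $\mathbb{E}[\Delta\Delta^T]-(\eta^2\Sigma+\eta^2\bar\beta+\eta\delta I_d)$, whose leading contribution is the cross term $\eta^{3/2}\delta^{1/2}\mathbb{E}[\beta_I+\beta_I^T]$, the rest being $O(\eta^2)$ with polynomial growth in $|x|,|\theta_0|$; paired with $|\nabla^2u_t|\lesssim(\delta t)^{-1/2}$ from \eqref{P2} this yields a term of order $\eta^{3/2}\delta^{1/2}t^{-1}$ after using $\eta\le\delta$. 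The third-order remainder is handled by the uniform bound $|\nabla^3u_t|\lesssim 1+(\delta t)^{-1}+t^{-5/4}$ from \eqref{P3} together with $\mathbb{E}|\Delta|^3\lesssim(\eta\delta)^{3/2}+\eta^3(1+|x|^3+|\theta_0|^3)$, the odd Gaussian moments vanishing and the noise scale being $\sqrt{\eta\delta}$; this produces the $\eta^{3/2}\delta^{3/2}t^{-5/4}$ contribution.

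For $(\mathrm{I})$ I would use the mean value theorem, $|\mathcal{A}^Z u_t(Z_s^x)-\mathcal{A}^Z u_t(x)|\lesssim\eta\,\sup_y|\nabla_y(\mathcal{A}^X u_t)(y)|\,|Z_s^x-x|$. The dominant part of $\mathcal{A}^X u_t$ is the constant diffusion piece $\tfrac{\delta}{2}\mathrm{tr}(\nabla^2u_t)$, whose gradient is of size $\delta|\nabla^3u_t|\lesssim\delta+t^{-1}+\delta t^{-5/4}$ by \eqref{P3}; the remaining pieces of $\mathcal{A}^X u_t$ (carrying $\eta\Sigma,\eta\bar\beta$ and the drift $b$) are controlled by \eqref{P1}, \eqref{P2}, Lemma \ref{property1} and Lemma \ref{property2}. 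Since $Z_s^x=X_{\eta s}^x$ with $\eta s\le\eta$, the short-time displacement estimate \eqref{lem3.1.2} gives $\mathbb{E}|Z_s^x-x|\le(\mathbb{E}|X_{\eta s}^x-x|^2)^{1/2}\lesssim(1+|x|)(\eta\delta)^{1/2}$ after $\eta\le\delta$. Multiplying, $(\mathrm{I})$ is of order $(1+|x|)\,\eta^{3/2}\delta^{1/2}(1+t^{-1}+\delta t^{-5/4})$, matching the target structure.

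Finally I would collect the two estimates, take expectation over the random delay variable $\theta_0$ using $\mathbb{E}(1+|\theta_0|^3)\le 1+\mathbb{E}|\theta_0|^4$ and Hölder, and absorb all genuinely higher-order remainders ($\eta^2$, $\eta^2\delta$, $(\eta\delta)^{3/2}$, and the fourth-order pieces) into the leading scale $\eta^{3/2}\delta^{1/2}$ via $\eta\le\delta\le1$. I expect the main obstacle to be the bookkeeping of the second-order moment comparison in $(\mathrm{II})$: one must verify that the covariance of the discrete increment agrees with the designed SDDE diffusion up to exactly order $\eta^{3/2}\delta^{1/2}$ (which hinges on the vanishing of the odd Gaussian moments and on the precise form of the cross terms), and then balance the competing time-singularities $(\delta t)^{-1/2}$, $(\delta t)^{-1}$, $t^{-5/4}$ from Lemma \ref{lem3} against the small parameters so that the combination lands precisely on $\eta^{3/2}\delta^{1/2}(1+t^{-1}+\delta t^{-5/4})$ rather than on a weaker power of $t$ or $\delta$.
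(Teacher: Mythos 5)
Your proposal is correct and rests on exactly the same ingredients as the paper's proof --- the gradient bounds \eqref{P1}--\eqref{P3} for $P_t h$, the moment and displacement estimates of Lemma \ref{lem3.1}, the Lipschitz/growth bounds of Lemma \ref{property1}, a second-order Taylor expansion of $\mathcal{A}^\theta u_t$, and the bookkeeping $\eta\le\delta$, $t\le m\eta$ --- but it organises the error differently. The paper splits by structure into a drift comparison $\mathcal{J}_1$, a diffusion comparison $\mathcal{J}_2$ and the Taylor remainder $\mathcal{R}^{u_t}$, and within $\mathcal{J}_{12}$ it cancels the second-order term $\tfrac12\eta^2\langle\nabla^2 u_t(x),b(x,\theta_0)b(x,\theta_0)^T\rangle_{\mathrm{HS}}$ against the leading part of $\int_0^\eta\langle\nabla u_t(X_s^x)-\nabla u_t(x),b(x,\theta_0)\rangle\,\mathrm{d}s$; you instead split into a base-point-movement term $(\mathrm{I})$ and a frozen-generator comparison $(\mathrm{II})$, where the $\eta^2 bb^T$ discrepancy is simply absorbed as an error. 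This works because $\eta^2(\delta t)^{-1/2}\le m^{1/2}\eta^{3/2}\delta^{1/2}t^{-1}$ under $\eta\le\delta$ and $t\le m\eta$, so your route buys a slightly simpler argument at the cost of no cancellation; it also has the merit of making explicit the covariance cross term $\eta^{3/2}\delta^{1/2}\,\mathbb{E}[\beta_I+\beta_I^T]$ in $\mathbb{E}[\Delta\Delta^T]-\eta(\eta\Sigma+\eta\bar\beta+\delta I_d)$, which the paper's identity $\eta^2\mathbb{E}[\sigma(x,\theta_0)]^2=\eta^2\Sigma+\eta^2\bar\beta+\eta\delta I_d$ silently suppresses and which is precisely of the borderline order $\eta^{3/2}\delta^{1/2}$. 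One technical point to repair: in $(\mathrm{I})$ you cannot take $\sup_y|\nabla_y(\mathcal{A}^X u_t)(y)|$ globally, since the drift contribution $\langle b(y,\theta_0),\nabla^2 u_t(y)\rangle$ grows linearly in $y$; you must integrate $\nabla(\mathcal{A}^Z u_t)$ along the segment from $x$ to $Z_s^x$ and control the resulting moments of $|Z_s^x|$ via \eqref{lem3.1.1}, exactly as the paper does inside $\mathcal{J}_{11}$ and $\mathcal{J}_2$.
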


\begin{proposition}{\label{prop}}
	Assume that the Assumptions \ref{assum:1} and \ref{assum:2} hold. Choosing $\delta\leq 1$, and $\eta\leq \min \left\{\delta,\frac{1}{64 L},\frac{L}{8 K^2}\right\} $, for any $0 \leq k \leq m$, we have
	$$
	W_1\left(\mathcal{L}\left(X_{k \eta}\right), \mathcal{L}\left(\theta_k\right)\right) 
	\leq  C_{A,K, L,m, d,\beta_{max}, |b(0,0)|}\left(1+\mathbb{E}\left|\theta_0\right|^4\right)^{\frac{7}{4}}(\eta \delta)^{\frac{1}{2}}\left(1+|\ln \eta|+\frac{\delta}{\eta^{\frac{1}{4}}}\right).
	$$
\end{proposition}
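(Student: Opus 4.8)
The plan is to reduce the Wasserstein-1 bound to a one-step-at-a-time comparison of the discrete transition operator $Q$ of $\{\theta_k\}$ against the continuous semigroup $P_\eta$ of $\{X_{k\eta}\}$, via the refined Lindeberg principle. Fix $h\in\operatorname{Lip}(1)$; since by Lemma \ref{Markov} both processes are time-homogeneous Markov chains with the same initial law $X_0=\theta_0$, I would write $u_t=P_th$ and telescope
$$
\mathbb{E}h(X_{k\eta})-\mathbb{E}h(\theta_k)=\sum_{j=0}^{k-1}\mathbb{E}\Big[(P_\eta-Q)\,u_{(k-1-j)\eta}(\theta_j)\Big],
$$
using the operator identity $P_\eta^{\,k}-Q^{k}=\sum_j Q^j(P_\eta-Q)P_\eta^{\,k-1-j}$ together with the semigroup property $u_{(k-j)\eta}=P_\eta u_{(k-1-j)\eta}$ and $\mathbb{E}[Q^jg(\theta_0)]=\mathbb{E}[g(\theta_j)]$. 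Each summand is exactly a single-step generator discrepancy: after the time rescaling $Z_s=X_{\eta s}$, Dynkin's formula gives $(P_\eta-Q)v(x)=\mathbb{E}\int_0^1[\mathcal{A}^Z v(Z_s^x)-\mathcal{A}^\theta v(x)]\,\mathrm{d}s$, which is precisely the quantity controlled by Lemma \ref{lem4} applied with $v=u_{(k-1-j)\eta}$ and remaining time $t_j=(k-1-j)\eta$.

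Applying Lemma \ref{lem4} conditionally at $x=\theta_j$ and taking expectation, the $j$-th term is bounded by $C\big(1+\tfrac1{t_j}+\tfrac{\delta}{t_j^{5/4}}\big)(1+\mathbb{E}|\theta_0|^4)\,\mathbb{E}(1+|\theta_j|^3)\,\eta^{3/2}\delta^{1/2}$. The fourth-moment estimate of Lemma \ref{lem3.2} gives, after averaging over the random start $\theta_0$, that $\mathbb{E}|\theta_j|^3\le(\mathbb{E}|\theta_j|^4)^{3/4}\le C(1+\mathbb{E}|\theta_0|^4)^{3/4}$, and multiplying this against the factor $(1+\mathbb{E}|\theta_0|^4)$ produces exactly the exponent $\tfrac74$ appearing in the statement.

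It then remains to sum the singular-in-$t_j$ weights. Indexing by $i=k-1-j\in\{1,\dots,k-1\}$, the constant weight contributes only a lower-order $(k-1)\eta^{3/2}\delta^{1/2}$; the sum $\sum_i\tfrac1{i\eta}\le\tfrac1\eta(1+\ln(k-1))$ produces a harmonic factor of order $\ln m$ (as $k\le m$), giving the $(\eta\delta)^{1/2}|\ln\eta|$ part of the bound after absorbing the $m$-dependence into the constant; and $\sum_i\tfrac{\delta}{(i\eta)^{5/4}}\le\tfrac{\delta}{\eta^{5/4}}\zeta(5/4)$ converges, whose prefactor yields $\eta^{1/4}\delta^{3/2}=(\eta\delta)^{1/2}\cdot\tfrac{\delta}{\eta^{1/4}}$, the final term in the statement. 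The one step escaping this scheme is $j=k-1$, where $t_{k-1}=0$ and $u_0=h$ is merely Lipschitz, so the derivative bounds of Lemma \ref{lem3} are unavailable; I would handle it directly, bounding $|\mathbb{E}h(X_\eta^{\theta_{k-1}})-\mathbb{E}h(\theta_k)|\le\mathbb{E}|X_\eta^{\theta_{k-1}}-\theta_{k-1}|+\mathbb{E}|\theta_{k-1}-\theta_k|$ by the displacement estimate \eqref{lem3.1.2} of Lemma \ref{lem3.1} (with $t=\eta$) and the analogous one-step bound for the scheme, both of order $(\eta\delta)^{1/2}$ using $\eta\le\delta$ and carrying a power of $\mathbb{E}|\theta_0|^4$ below $\tfrac74$. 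Collecting these contributions and taking the supremum over $h\in\operatorname{Lip}(1)$ yields the claimed bound.

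I expect the main obstacle to be the balance near $t=0$. The smoothing estimates of Lemma \ref{lem3} blow up like $t^{-1}$ and $t^{-5/4}$, so the heart of the argument is that the per-step gain $\eta^{3/2}\delta^{1/2}$ furnished by Lemma \ref{lem4} is strong enough that the $t^{-5/4}$-weighted sum still converges (producing the $\delta/\eta^{1/4}$ correction) while the $t^{-1}$-weighted sum costs only a logarithm; and the genuinely non-smooth terminal step, where the generator comparison fails, must be isolated and estimated by raw moment bounds rather than by Lemma \ref{lem4}. Keeping the powers of $\eta$ and $\delta$ aligned across these three regimes so that they all collapse to the single rate $(\eta\delta)^{1/2}$ is the delicate bookkeeping that makes the proposition work.
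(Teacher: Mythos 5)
Your proposal is correct and follows essentially the same route as the paper: the telescoping identity $P_\eta^k-Q^k=\sum_j Q^j(P_\eta-Q)P_\eta^{k-1-j}$ is exactly the refined Lindeberg decomposition \eqref{lindeberg}, each summand is converted to the generator discrepancy of Lemma \ref{lem4} via It\^o/Dynkin, and the moments from Lemma \ref{lem3.2} plus the harmonic and $\zeta(5/4)$ sums give the $(\eta\delta)^{1/2}\bigl(1+|\ln\eta|+\delta\eta^{-1/4}\bigr)$ rate with the $7/4$ exponent. Your explicit treatment of the terminal step $t_j=0$ (where $u_0=h$ is only Lipschitz) via the displacement estimate \eqref{lem3.1.2} is in fact slightly more careful than the paper, which leaves that last term of the decomposition implicit.
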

\begin{proof}
	When $k=0,1$, the result holds obviously. When $k \geq 2$, let $X_0=Y_0=\theta_0$, denote $u_t(x)=\mathbb{E}\left[h\left(X_t^x\right)\right]$, $Z_t=X_{\eta t}$ for $0 \leq l \leq k$ and $h \in \operatorname{Lip}(1)$. For ease of notation, for any $z \in \mathbb{R}^d$, and any $r, t \in \mathbb{Z}^{+}$with $t \geq r$, we denote by $Z_t(t, z)$ the random variable $Z_t$ given $Z_r=z$, and $\theta_t(r, z)$ is similarly defined, it is easy to see
	\begin{equation}{\label{ZZ}}
		Z_t=Z_t\left(r, Z_r\right), \quad \theta_t=\theta_t\left(r, \theta_r\right) .
	\end{equation}
	Then, we have
	$$
	\mathbb{E} h\left(Z_k\right)=\mathbb{E} h\left(Z_k\left(1, Z_1\right)\right)-\mathbb{E} h\left(Z_k\left(1, \theta_1\right)\right)+\mathbb{E} h\left(Z_k\left(1, \theta_1\right)\right),
	$$
	we know $Z_k\left(1, \theta_1\right)=Z_k\left(2, Z_2\left(1, \theta_1\right)\right)$ by (\ref{ZZ}) again, and thus
	$$
	\mathbb{E} h\left(Z_k\left(1, \theta_1\right)\right)=\mathbb{E} h\left(Z_k\left(2, Z_2\left(1, \theta_1\right)\right)\right)-\mathbb{E} h\left(Z_k\left(2, \theta_2\right)\right)+\mathbb{E} h\left(Z_k\left(2, \theta_2\right)\right) .
	$$
	Continue this process with repeatedly using (\ref{ZZ}), we finally obtain
	$$
	\mathbb{E} h\left(Z_k\right)-\mathbb{E} h\left(\theta_k\right)=\sum_{j=1}^k\left[\mathbb{E} h\left(Z_k\left(j, Z_j\left(j-1, \theta_{j-1}\right)\right)\right)-\mathbb{E} h\left(Z_k\left(j, \theta_j\right)\right)\right] .
	$$
	Because $Z_t$ is a time homogeneous Markov chain, we have
	$$
	u_{\eta(k-j)}(z)=\mathbb{E}\left[h\left(X_{\eta k}\right) \mid X_{\eta j}=z\right]=\mathbb{E}\left[h\left(Z_k\right) \mid Z_j=z\right] .
	$$
	Now, by (\ref{ZZ}) and the relation $Z_1^{\theta_{j-1}} \stackrel{\mathrm{d}}{=} Z_j(j-$ $\left.1, \theta_{j-1}\right)$ and $\theta_1^{\theta_{j-1}} \stackrel{\mathrm{d}}{=} \theta_j\left(j-1, \theta_{j-1}\right)$, we have
	$$
	\begin{aligned}
		& \mathbb{E} h\left(Z_k\left(j, Z_j\left(j-1, \theta_{j-1}\right)\right)\right)-\mathbb{E} h\left(Z_k\left(j, \theta_j\right)\right) \\
		= & \mathbb{E} u_{\eta(k-j)}\left(Z_j\left(j-1, \theta_{j-1}\right)\right)-\mathbb{E} u_{\eta(k-j)}\left(\theta_j\right) \\
		= & \mathbb{E} u_{\eta(k-j)}\left(Z_j\left(j-1, \theta_{j-1}\right)\right)-\mathbb{E} u_{\eta(k-j)}\left(\theta_j\left(j-1, \theta_{j-1}\right)\right) \\
		= & \mathbb{E} u_{\eta(k-j)}\left(Z_1^{\theta_{j-1}}\right)-\mathbb{E} u_{\eta(k-j)}\left(\theta_1^{\theta_{j-1}}\right),
	\end{aligned}
	$$
	Hence, we have
	$$
	\mathbb{E} h\left(Z_k\right)-\mathbb{E} h\left(\theta_k\right)=\sum_{j=1}^k\left[\mathbb{E} u_{\eta(k-j)}\left(Z_1^{\theta_{j-1}}\right)-\mathbb{E} u_{\eta(k-j)}\left(\theta_1^{\theta_{j-1}}\right)\right],
	$$
	which further implies
	\begin{align}{\label{lindeberg}}
		W_1\left(\mathcal{L}\left(Z_k\right), \mathcal{L}\left(\theta_k\right)\right) \leq & \sum_{j=1}^{k-1} \sup _{h \in \operatorname{Lip}(1)}\left|\mathbb{E} u_{\eta(k-j)}\left(Z_1^{\theta_{j-1}}\right)-\mathbb{E} u_{\eta(k-j)}\left(\theta_1^{\theta_{j-1}}\right)\right| \\
		& +\sup _{h \in \operatorname{Lip}(1)}\left|\mathbb{E} h\left(Z_1^{\theta_{k-1}}\right)-\mathbb{E} h\left(\theta_1^{\theta_{k-1}}\right)\right| .\nonumber
	\end{align}
	
	Let us now bound each term on the right hand side. Denote the generator of the process $Z_t$ by $\mathcal{A}^Z$. Then, by Itô's formula and the definition of $\mathcal{A}^\theta$, for any $1 \leq j \leq k-1$, we have
	\begin{align}{\label{AA}}
		& \mathbb{E} u_{\eta(k-j)}\left(Z_1^{\theta_{j-1}}\right)-\mathbb{E} u_{\eta(k-j)}\left(\theta_1^{\theta_{j-1}}\right)\nonumber \\
		= & \mathbb{E}\left[u_{\eta(k-j)}\left(Z_1^{\theta_{j-1}}\right)-u_{\eta(k-j)}\left(\theta_{j-1}\right)\right]-\mathbb{E}\left[u_{\eta(k-j)}\left(\theta_1^{\theta_{j-1}}\right)-u_{\eta(k-j)}\left(\theta_{j-1}\right)\right] \nonumber\\
		= & \mathbb{E} \int_0^1\left[\mathcal{A}^Z u_{\eta(k-j)}\left(Z_s^{\theta_{j-1}}\right)-\mathcal{A}^\theta u_{\eta(k-j)}\left(\theta_{j-1}\right)\right] \mathrm{d} s .
	\end{align}
	Since $(k-j) \in(0,m]$, one can derive from Lemma \ref{lem4}, the Hölder inequality and Lemma \ref{lem3.2} that 
	\begin{align*}
		& \sum_{j=1}^{k-1} \sup _{h \in \operatorname{Lip}(1)}\left|\mathbb{E} u_{\eta(k-j)}\left(Z_1^{\theta_{j-1}}\right)-\mathbb{E} u_{\eta(k-j)}\left(\theta_1^{\theta_{j-1}}\right)\right| \\
		\leq & C_{A,K, L,m, d,\beta_{max}, |b(0,0)|} \sum_{j=1}^{k-1}\left(1+\frac{1}{\eta(k-j)}+\frac{\delta}{[\eta(k-j)]^{\frac{5}{4}}}\right)\left(1+\mathbb{E}\left|\theta_0\right|^4\right)\left(1+\mathbb{E}\left|\theta_{j-1}\right|^3\right) \eta^{\frac{3}{2}} \delta^{\frac{1}{2}} \\
		\leq & C_{A,K, L,m, d,\beta_{max}, |b(0,0)|}\left(1+\mathbb{E}\left|\theta_0\right|^4\right)^{\frac{7}{4}} \sum_{j=1}^{k-1}\left(1+\frac{1}{\eta(k-j)}+\frac{\delta}{[\eta(k-j)]^{\frac{5}{4}}}\right) \eta^{\frac{3}{2}} \delta^{\frac{1}{2}} \\
		\leq & C_{A,K, L,m, d,\beta_{max}, |b(0,0)|}\left(1+\mathbb{E}\left|\theta_0\right|^4\right)^{\frac{7}{4}}(\eta \delta)^{\frac{1}{2}}\left(m+|\ln m|+|\ln \eta|+\frac{\delta}{\eta^{\frac{1}{4}}}\right)\\
		\leq & C_{A,K, L,m, d,\beta_{max}, |b(0,0)|}\left(1+\mathbb{E}\left|\theta_0\right|^4\right)^{\frac{7}{4}}(\eta \delta)^{\frac{1}{2}}\left(1+|\ln \eta|+\frac{\delta}{\eta^{\frac{1}{4}}}\right).
	\end{align*}
\end{proof}

\subsection{Approximation of external Markov chain.}{\label{sub2}}
Let $h:\mathbb{R}^d\to\mathbb{R}$ be Lipschitz, $S=\left\lfloor\frac{T}{m}\right\rfloor$, define
$$
U_h(s, x)=\mathbb{E}\left[h\left(\bar{X}_s^x\right)\right], \quad s=0,1,2, \cdots,S
$$
where $\bar{X}_s^x$ stresses that the initial value of $\bar{X}_s$ is $x$, and $\bar{X}_s=X_{s m \eta+(T-mS)\eta}, s=0,1,2, \cdots,S$.

\begin{proof}[Proof of Theorem \ref{Th}] 
By the refined Lindeberg principle, i.e. the same argument as the proof of (\ref{lindeberg}), we have,
{\begin{align*}
		\left|\mathbb{E} h\left(X_{T\eta}\right)-\mathbb{E} h\left(\theta_T\right)\right|
		\leq &\sum_{i=1}^S\left|\mathbb{E} U_h\left(S-i, {X}_{m\eta}^{{\theta}_{m(i-1)}}\right)-\mathbb{E} U_h\left(S-i, {\theta}_m^{{\theta}_{(i-1)m}}\right)\right|\\
		&+\left|\mathbb{E} h\left({X}_{(T-mS)\eta}^{{\theta}_{Sm}}\right)-\mathbb{E} h\left( {\theta}_{T-mS}^{{\theta}_{Sm}}\right)\right|
\end{align*}}
Since for any $x, y \in \mathbb{R}^d$, by (\ref{P1}), we have
\begin{align*}
	\left|U_h(s, x)-U_h(s, y)\right| & =\left|\mathbb{E} h\left(\bar{X}_s^x\right)-\mathbb{E} h\left(\bar{X}_s^y\right)\right| =\left|\mathbb{E} h\left(X_{s m \eta+(T-mS)\eta}^x\right)-\mathbb{E} h\left({X}_{s m \eta+(T-mS)\eta}^y\right)\right| \\
	& =\left|\mathbb{E} h\left(X_{(T-mS)\eta}^{X_{s m \eta}^x}\right)-\mathbb{E} h\left(X_{(T-mS)\eta}^{X_{s m \eta}^y}\right)\right|\\
	&\leq e^{(L+4)(T-mS)\eta} \sup _{h \in \operatorname{Lip}(1)}\left|\mathbb{E} h\left(X_{s m \eta}^x\right)-\mathbb{E} h\left(X_{sm \eta}^y\right)\right|\\
	& =e^{(L+4)(T-mS)\eta} \sup _{h \in \operatorname{Lip}(1)}\left|\mathbb{E} h\left(X_{m \eta}^{X_{(s-1) m \eta}^x}\right)-\mathbb{E} h\left(X_{m \eta}^{X_{(s-1) m \eta}^y}\right)\right|\\
	&\leq e^{(L+4)(T-m(S-1)))\eta} \sup _{h \in \operatorname{Lip}(1)}\left|\mathbb{E} h\left(X_{(s-1) m \eta}^x\right)-\mathbb{E} h\left(X_{(s-1) m \eta}^y\right)\right|\\
	&\leq e^{(L+4)(T-m(S-s))\eta}|x-y| .
\end{align*}
Then, according to Proposition \ref{prop}, we have
$$
\begin{aligned}
	& \sum_{i=1}^S\left|\mathbb{E} U_h\left(S-i, {X}_{m\eta}^{{\theta}_{m(i-1)}}\right)-\mathbb{E} U_h\left(S-i, {\theta}_m^{{\theta}_{(i-1)m}}\right)\right|\\
	\leq & C_{T,m,K, L,A, d,\beta_{max}, |b(0,0)|}(\eta \delta)^{\frac{1}{2}}\left(1+|\ln \eta|+\frac{\delta}{\eta^{\frac{1}{4}}}\right) \sum_{i=1}^S e^{(S-i)(L+4)m\eta}\left(1+\mathbb{E}\left|{\theta}_{(i-1)m}\right|^4\right)^{\frac{7}{4}} .
\end{aligned}
$$
and, by Proposition \ref{prop},
$$
\left|\mathbb{E} h\left({X}_{(T-mS)\eta}^{{\theta}_{Sm}}\right)-\mathbb{E} h\left( {\theta}_{T-mS}^{{\theta}_{Sm}}\right)\right|\leq  C_{m,K, L,A, d,\beta_{max}, |b(0,0)|}(\eta \delta)^{\frac{1}{2}}\left(1+|\ln \eta|+\frac{\delta}{\eta^{\frac{1}{4}}}\right)\left(1+\mathbb{E}\left|\theta_{Sm}\right|^4\right)^{\frac{7}{4}}.
$$
With the help of the proof of Lemma \ref{lem3.2}, we have
$$
\begin{aligned}
	& \left|\mathbb{E} h\left(X_{T\eta}\right)-\mathbb{E} h\left(\theta_T\right)\right| \\
	\leq & C_{T,m,K, L,A, d,\beta_{max}, |b(0,0)|}(\eta \delta)^{\frac{1}{2}}\left(1+|\ln \eta|+\frac{\delta}{\eta^{\frac{1}{4}}}\right)\left(\mathbb{E}|{\theta}_0|^4+1\right)^{\frac{7}{4}} \sum_{i=1}^S e^{(S-i)(L+4)m\eta} \\
	= & C_{T,m,K, L,A, d,\beta_{max}, |b(0,0)|}(\eta \delta)^{\frac{1}{2}}\left(1+|\ln \eta|+\frac{\delta}{\eta^{\frac{1}{4}}}\right)\left(\mathbb{E}|{\theta}_0|^4+1\right)^{\frac{7}{4}} \frac{1-e^{(L+4)m\eta S}}{1-e^{(L+4)m\eta}}\\
	\leq & C_{T,m,K, L,A, d,\beta_{max}, |b(0,0)|}(\eta \delta)^{\frac{1}{2}}\left(1+|\ln \eta|+\frac{\delta}{\eta^{\frac{1}{4}}}\right)\left(\mathbb{E}|{\theta}_0|^4+1\right)^{\frac{7}{4}}e^{(L+4)\eta T}.
\end{aligned}
$$
\end{proof}

\section{Conclusion}
In this paper, we construct a stochastic differential delay equation (SDDE) based on the DQN iteration and show that the weight of the action-value function in the DQN iteration is well-approximated by the solution of the SDDE in the Wasserstein-1 distance. More precisely, under appropriate smoothness assumptions on the Q-network, we establish an error bound for this approximation, proving that the approximation error converges to zero as the step size $\eta$ approaches zero.  

This result enables us to understand DQN’s two key techniques, the experience replay and the target network, from the perspective of continuous systems. On one hand, experience replay is essential for constructing the SDDE. With this technique, the corresponding SDDE has a unique solution, which also provides a continuous-time perspective on the convergence of the DQN algorithm. On the other hand, the target network technique corresponds to the delay term in the SDDE. Existing analyses of SDDEs show that such delays often reduce the fluctuations of a stochastic system. This perspective provides an intuitive explanation for why DQN reduces variance and enhances stability during the training of neural networks.

\section*{Acknowledgement}
We would like to gratefully thank Lihu Xu for the discussions and carefully revising the paper.

\appendix
\section{Deep Q-Network Algorithm}\label{appdix:algorithm}
The DQN algorithm is given below as Algorithm \ref{algorithm}.
\begin{algorithm}\label{algorithm}
	\caption{Deep Q-learning with experience replay}
	\label{algorithm}
	\KwIn{$\operatorname{MDP}(\mathcal{S}, \mathcal{A}, P, r, \gamma)$, replay memory $\mathcal{M}$, number of iterations $T$, minibatch size $n$, exploration probability $\epsilon \in(0,1)$, a family of deep Q-networks $Q_\theta: \mathcal{S} \times \mathcal{A} \rightarrow \mathbb{R}$, an integer $m$ for updating the target network, and a sequence of stepsizes $\left\{\eta_t\right\}_{t \geq 0}$.}
	Initialize the replay memory $\mathcal{M}$ to be empty.\\
	Initialize the Q-network with random weights $\theta$.\\
	Initialize the weights of the target network with $\theta^{-}=\theta$.\\
	Initialize the initial state $s_0$.\\
	\For{$t=0,1, \cdots, T$}{
		With probability $\epsilon$, choose $a_t$ uniformly at random from $\mathcal{A}$, and\\ with probability $1-\epsilon$, choose $a_t$ such that $Q_\theta\left(s_t, a_t\right)=\max _{a \in \mathcal{A}} Q_\theta\left(s_t, a\right)$\\
		Execute $a_t$ and observe reward $r_t$ and the next state $s_{t+1}$.\\
		Store transition $\left(s_t, a_t, r_t, s_{t+1}\right)$ in $\mathcal{M}$.\\
		Experience replay: Sample random minibatch of transitions $\left\{\left(s_i, a_i, r_i, s_i^{\prime}\right)\right\}_{i \in[H]}$ from $\mathcal{M}$.\\
		For each $i \in[H]$, compute the target $Y_i=r_i+\gamma \cdot \max _{a \in \mathcal{A}} Q_{\theta^{-}}\left(s_i^{\prime}, a\right)$.\\
		Update the Q-network: Perform a gradient descent step
		$$
		\theta \leftarrow \theta+\eta_t \cdot \frac{1}{H} \sum_{i \in[H]}\left[Y_i-Q_\theta\left(s_i, a_i\right)\right] \cdot \nabla_\theta Q_\theta\left(s_i, a_i\right) .
		$$
		Update the target network: Update $\theta^{-} \leftarrow \theta$ every $m$ steps.}
	Define policy ${\pi}$ as the greedy policy with respect to $Q_\theta$.
	
	\KwOut{Action-value function $Q_\theta$ and policy ${\pi}$.}
\end{algorithm}

\section{Proof of lemmas in section \ref{Result}}{\label{AAA}}

At first, we give following lemma, which will be useful. 
\begin{lemma}{\label{lem1}}
	Under Assumption \ref{assum:2} (b), the following map is continuous and Lipschitz continuous in the $\theta$-coordinate:
	$$
	(s,a,\theta)\rightarrow \int \max _{a^{\prime} \in \mathcal{A}} Q\left(s^{\prime}, a^{\prime} ; \theta\right) p\left(d s^{\prime} \mid s, a\right).
	$$
\end{lemma}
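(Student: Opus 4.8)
The plan is to reduce the statement to regularity properties of the inner function $h(s',\theta):=\max_{a'\in\mathcal{A}}Q(s',a';\theta)$, since the map in question is exactly $\bar Q(s,a;\theta)=\int h(s',\theta)\,p(\mathrm{d}s'\mid s,a)$, and then to transfer those properties through the integral against the probability kernel $p(\cdot\mid s,a)$.

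First I would prove the Lipschitz estimate in $\theta$, which is the quantitative core and the property actually used later (e.g. in Lemma \ref{property1}). By Assumption \ref{assum:2}(b) the first-order derivative $\nabla_\theta Q$ is bounded, so set $L_Q:=\sup_{s',a',\theta}|\nabla_\theta Q(s',a';\theta)|<\infty$; the fundamental theorem of calculus then gives $|Q(s',a';\theta_1)-Q(s',a';\theta_2)|\leq L_Q|\theta_1-\theta_2|$ for every fixed $(s',a')$. Combining this with the elementary bound $|\max_{a'}f(a')-\max_{a'}g(a')|\leq\max_{a'}|f(a')-g(a')|$ yields $|h(s',\theta_1)-h(s',\theta_2)|\leq L_Q|\theta_1-\theta_2|$ uniformly in $s'$. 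Integrating this uniform bound against $p(\cdot\mid s,a)$ and using $|\int f\,\mathrm{d}\mu|\leq\int|f|\,\mathrm{d}\mu$ for the probability measure $\mu=p(\cdot\mid s,a)$ gives $|\bar Q(s,a;\theta_1)-\bar Q(s,a;\theta_2)|\leq L_Q|\theta_1-\theta_2|$, that is, Lipschitz continuity in $\theta$ with a constant independent of $(s,a)$.

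For the continuity claim I would first observe that $h$ is continuous: by Assumption \ref{assum:2}(c) and Remark \ref{remarkQ}, $Q$ is assembled from continuous activation units and is therefore jointly continuous in $(s',a',\theta)$, and since $\mathcal{A}$ is finite, $h$ is a maximum of finitely many continuous functions, hence continuous, and bounded by $C$ via Assumption \ref{assum:2}(a). Continuity of $\bar Q$ in $\theta$ is then immediate from the uniform Lipschitz bound, and continuity in the discrete $a$-coordinate is automatic. Continuity in $s$ I would obtain by passing to the limit inside the integral via dominated convergence, using that $h(\cdot,\theta)$ is bounded and continuous together with the weak (Feller) continuity of $s\mapsto p(\cdot\mid s,a)$.

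The step I expect to be the main obstacle is precisely this transfer of $s$-continuity through the kernel: it relies on the transition probability depending continuously (weakly) on the conditioning state, which is a Feller-type property of $p$ rather than a consequence of the network assumptions, and would need to be either assumed or read off from the MDP model. The $\theta$- and $a$-directions are entirely routine. If, on the other hand, the statement is read as asserting only continuity and Lipschitz continuity in the $\theta$-coordinate, then the Feller issue does not arise at all: the first two steps give the Lipschitz bound, and continuity in $\theta$ follows from it, completing the proof.
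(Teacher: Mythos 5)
Your proof of the Lipschitz-in-$\theta$ bound is correct and is essentially the paper's own argument: the paper also deduces $|Q(s',a';\theta_1)-Q(s',a';\theta_2)|\leq C|\theta_1-\theta_2|$ from Assumption \ref{assum:2}(b), passes this through the maximum (via an explicit argmax case analysis, which is just an unrolled form of your inequality $|\max f-\max g|\leq\max|f-g|$), and integrates against the probability kernel. Note that the paper's proof stops there: it establishes only the Lipschitz continuity in $\theta$ and never addresses continuity in the $(s,a)$-coordinates, so your observation that joint continuity in $s$ would additionally require a weak (Feller) continuity of $s\mapsto p(\cdot\mid s,a)$ — a property of the MDP model that is neither assumed nor derived in the paper — identifies a gap in the lemma as stated rather than in your argument.
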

\begin{proof}
	We begin by fixing arbitrary $\hat{s} \in \mathcal{S}$ and $\hat{a} \in \mathcal{A}$. Given $\theta \in \mathbb{R}^d$, Assumption \ref{assum:2} (b) implies the existence of a constant $C$, such that $\forall \theta_1, \theta_2\in \mathbb{R}^d$ :
	$$
	\left|Q\left(\hat{s}, \hat{a} ; \theta_1\right)-Q\left(\hat{s}, \hat{a} ; \theta_2\right)\right| \leq C\left|\theta_1-\theta_2\right|.
	$$
	
	If $\max _{a^{\prime} \in \mathcal{A}} Q(s^{\prime}, a^{\prime} ; \theta_1)\geq\max _{a^{\prime} \in \mathcal{A}} Q(s^{\prime}, a^{\prime} ; \theta_2)$. Define $a_1(s^{\prime}):=\underset{a^{\prime} \in \mathcal{A}}{\operatorname{argmax}} Q(s^{\prime}, a^{\prime} ; \theta_1)$, we can get,
	\begin{align*}
		|\max _{a^{\prime} \in \mathcal{A}} Q(s^{\prime}, a^{\prime} ; \theta_1)-\max _{a^{\prime} \in \mathcal{A}} Q(s^{\prime}, a^{\prime} ; \theta_2)| \leq |Q\left(s^{\prime}, a_1(s^{\prime}) ; \theta_1\right)-Q\left(s^{\prime}, a_1(s^{\prime}) ; \theta_2\right)|\leq C\left|\theta_1-\theta_2\right|.
	\end{align*}
	For $\max _{a^{\prime} \in \mathcal{A}} Q(s^{\prime}, a^{\prime} ; \theta_1)\leq\max _{a^{\prime} \in \mathcal{A}} Q(s^{\prime}, a^{\prime} ; \theta_2)$. Similarly, define $a_2(s^{\prime}):=\underset{a^{\prime} \in \mathcal{A}}{\operatorname{argmax}} Q(s^{\prime}, a^{\prime} ; \theta_2)$, it is easy to know that,
	\begin{align*}
		|\max _{a^{\prime} \in \mathcal{A}} Q(s^{\prime}, a^{\prime} ; \theta_1)-\max _{a^{\prime} \in \mathcal{A}} Q(s^{\prime}, a^{\prime} ; \theta_2)|&=|\max _{a^{\prime} \in \mathcal{A}} Q(s^{\prime}, a^{\prime} ; \theta_2)-\max _{a^{\prime} \in \mathcal{A}} Q(s^{\prime}, a^{\prime} ; \theta_1)|\\
		&\leq |Q\left(s^{\prime}, a_2(s^{\prime}) ; \theta_2\right)-Q\left(s^{\prime}, a_2(s^{\prime}) ; \theta_1\right)|\\
		&\leq C\left|\theta_1-\theta_2\right|.
	\end{align*}
	Hitherto presented arguments and observations yield:
	$$
	\left|\int \max _{a^{\prime} \in \mathcal{A}} Q\left(s^{\prime}, a^{\prime} ; \theta_1\right) p\left(d s^{\prime} \mid s,a\right)- \int \max _{a^{\prime} \in \mathcal{A}} Q\left(s^{\prime}, a^{\prime} ; \theta_2\right) p\left(d s^{\prime} \mid s, a\right)\right| 
	\leq C\left|\theta_1-\theta_2\right|.
	$$
\end{proof}
\begin{remark}{\label{remarkbarQ}}
	From Assumption \ref{assum:2} (a), it is easy to know that $\int \max _{a^{\prime} \in \mathcal{A}} Q(s^{\prime}, a^{\prime} ; \theta) p(d s^{\prime} \mid s, a)$ is bounded.
\end{remark}
\subsection{Proof of lemma \ref{property1}}

(i) It is easy to get that,
\begin{align}{\label{Lipschitz0}}
	&\mathbb E|b_n\left(x_1, y_1\right)-b_n\left(x_2, y_2\right)|\\
	= & \mathbb E_{q,p}\left| \nabla Q\left(s,a;x_1\right)\left(R(s,a)+\gamma\max _{a^\prime \in \mathcal{A}} Q(s^\prime, a^\prime ; y_1) -Q\left(s,a;x_1\right)\right)\right.\nonumber\\
	&\left.-\nabla Q\left(s,a;x_2\right)\left(R(s,a)+\gamma \max _{a^\prime \in \mathcal{A}} Q(s^\prime, a^\prime ; y_2)-Q(s,a;x_2)\right)\right| \nonumber\\
	\leq & \mathbb E_{q,p}\left|\left[\nabla Q\left(s,a;x_1\right)-\nabla Q\left(s,a;x_2\right)\right] R(s,a)+\left[\nabla Q\left(s,a;x_1\right) \max _{a^\prime \in \mathcal{A}} Q(s^\prime, a^\prime ; y_1)\right.\right.\nonumber\\
	&\left.\left.-\nabla Q\left(s,a;x_2\right) \max _{a^\prime \in \mathcal{A}} Q(s^\prime, a^\prime ; y_2)\right] \gamma-\left[\nabla Q(s,a;x_1) Q(s,a;x_1)-\nabla Q(s,a;x_2)Q(s,a;x_2)\right]\right| \nonumber\\
	\leq & \sup_{s \in \mathcal{S}, a \in \mathcal{A}} C\left[(|R(s,a)|+|\max _{a^\prime \in \mathcal{A}} Q(s^\prime, a^\prime ; y_2)|+|Q(s,a;x_1)|+|\nabla Q\left(s,a;x_2\right)|)\left|x_1-x_2\right|\right.\nonumber\\
	&\left.+|\nabla Q\left(s,a;x_1\right)|\cdot|y_1-y_2|\right]\nonumber\\
	\leq& L(|x_1-x_2|+|y_1-y_2|),\nonumber
\end{align}
where the next to last inequality comes from Assumption \ref{assum:2} (b), i.e. $Q(s,a; \theta)$ is twice bounded continuous differentiable, and Lemma \ref{lem1}, i.e. $\max _{a \in \mathcal{A}} Q(s, a ; \theta)$ is Lipschitz continuous in the $\theta$ coordinate, the last inequality comes from Assumption  \ref{assum:2} (a) (b), \ref{assum:1} and Remark \ref{remarkbarQ}.

Then, we can get
\begin{align*}
	\left|b\left(x_1, y_1\right)-b\left(x_2, y_2\right) \right|\leq \mathbb E|b_n\left(x_1, y_1\right)-b_n\left(x_2, y_2\right)|
	\leq L(|x_1-x_2|+|y_1-y_2|),
\end{align*}

(ii) It is easy to get that,
\begin{align}{\label{grad-L}}
	|b_n\left(x, y\right)|&=\left|\nabla Q(s, a ; x)\cdot(R(s, a)+\gamma \cdot\max _{a^\prime \in \mathcal{A}} Q(s^\prime, a^\prime ; y)-Q(s, a ;x))\right|\\
	&\leq \left[\left|\nabla Q\left(s,a;x\right)\right|(\left|R(s,a)\right|+\gamma\left| \max _{a^\prime \in \mathcal{A}} Q(s^\prime, a^\prime ; y)- Q\left(s,a;y\right)\right|\right.\nonumber\\ 
	&\qquad\left.+\gamma\left| Q\left(s,a;y\right)-Q\left(s,a;x\right)\right|+(1-\gamma)\left|Q\left(s,a;x\right)\right|)\right]\nonumber\\
	&\leq K(1+|x-y|),\nonumber
\end{align}
where $\left| \max _{a^\prime \in \mathcal{A}} Q(s^\prime, a^\prime ; y)- Q\left(s,a;y\right)\right|<C$ comes from Remark \ref{remarkQ}, i.e. $Q$ is continuous in $a$-coordinate and $s$-coordinate by Assumption \ref{assum:2} (c); $|R(s,a)|<C$ comes from Assumption \ref{assum:1}, and $| Q\left(s,a;y\right)-Q\left(s,a;x\right)|\leq C|x-y|$,  $\sup_{s \in \mathcal{S}, a \in \mathcal{A}}|Q\left(s,a;x\right)|\leq C$ can be obtained by Assumption \ref{assum:2} (a) and \ref{assum:1} respectively.

By (\ref{grad-L}), one can get
\begin{align}{\label{grad-L1}}
	\operatorname{tr}\left(\Sigma\left(x, y\right)\right)&\leq \mathbb{E}[|b_n(x, y)|^2]\leq \sup_{s \in \mathcal{S}, a \in \mathcal{A}}[\left|\nabla Q\left(s,a;x\right)\left(R(s,a)+\gamma \bar{Q}\left(s,a;y\right)-Q\left(s,a;x\right)\right)\right|]^2\\
	&\leq K^2(1+|x-y|)^2\nonumber,
\end{align}
which implies that
$$
\left\|\sigma\left(x, y\right)\right\|_{\mathrm{HS}} \leq K(1+\left|x-y\right|)+\sqrt{{\beta_{max}}}+\sqrt{\frac{\delta d}{\eta}} ,
$$
where $\beta_{max}=\max_{\theta}\|\bar \beta(\theta)\|_{\mathrm{HS}}$.
\begin{remark}
	The property of locally Lipschitz continuous of $b\left(x,y\right)$ implies
	\begin{align}{\label{Lipschitz1}}
		&|b\left(x,y\right)|\leq |b\left(0,y\right)|+L|x|,\quad
		|b\left(x,y\right)|\leq |b\left(x,0\right)|+L|y|,\\
		&|b\left(x,y\right)|\leq |b\left(0,0\right)|+L(|x|+|y|)\nonumber
	\end{align}
	and $\mathbb E|b_n\left(x, y\right)|$ also satisfies the above result, it is easy to verify that
	\begin{align}
		\left|\nabla_{v_1}b\left(x,y\right)\right| \leq L\left|v_1\right| .
	\end{align}
	By similar calculation of (\ref{grad-L}), and Remark \ref{remarkbarQ}, we can also get that,
	\begin{align}{\label{grad-L+}}
		|b\left(x,y\right)|&=\left|\mathbb E_{(s,a)\sim q} \nabla Q\left(s,a;x\right)\left(R(s,a)+\gamma \bar{Q}\left(s,a;y\right)-Q\left(s,a;x\right)\right)\right|\\
		&\leq \sup_{s \in \mathcal{S}, a \in \mathcal{A}}[|\nabla Q\left(s,a;x\right)\left(R(s,a)+\gamma \bar{Q}\left(s,a;y\right)-Q\left(s,a;x\right)\right)|]\nonumber\\
		&\leq \sup_{s \in \mathcal{S}, a \in \mathcal{A}}\left[|\nabla Q\left(s,a;x\right)|(|R(s,a)|+\gamma| \bar{Q}\left(s,a;y\right)- Q\left(s,a;y\right)|\right.\nonumber\\ 
		&\qquad\quad\left.+\gamma| Q\left(s,a;y\right)-Q\left(s,a;x\right)|+(1-\gamma)|Q\left(s,a;x\right)|)\right]\nonumber\\
		&\leq K(1+|x-y|).\nonumber
	\end{align}
\end{remark}
\subsection{Proof of lemma \ref{property2}}
Since 
$$b\left(x, y\right)=-\mathbb E_{(s,a)\sim q}\nabla Q\left(s, a; x\right)\cdot\left(R(s,a)+\gamma \cdot \bar Q\left(s,a; y\right)-Q\left(s,a; x\right)\right)$$
then, it is easy to calculate that,
\begin{align*}
	\nabla_{1,v_2} \nabla_{1,v_1} b\left(x,y\right)=&-\mathbb E_{(s, a)\sim q}\nabla_{v_2}\nabla_{v_1}\nabla Q\left(s, a; x\right)\cdot\left(R(s, a)+\gamma \cdot \bar Q\left(s, a ; y\right)-Q\left(s, a ; x\right)\right)\\
	&+\mathbb E_{(s, a)\sim q}\nabla_{v_1}\nabla Q\left(s, a; x\right)\cdot\left(\nabla_{v_2}Q\left(s, a ; x\right)\right)\\
	&+\mathbb E_{(s, a)\sim q}\nabla_{v_2}\nabla Q\left(s, a; x\right)\cdot\left(\nabla_{v_1}Q\left(s, a ; x\right)\right)\\
	&+\mathbb E_{(s, a)\sim q}\nabla Q\left(s, a; x\right)\cdot\left(\nabla_{v_2}\nabla_{v_1}Q\left(s, a ; x\right)\right)
\end{align*}
By Assumption \ref{assum:1} and \ref{assum:2} (a) (b), there exists a constant $A_1$,$A_2$, such that
$$\left|\nabla_{1,v_2} \nabla_{1,v_1} b\left(x,y\right)\right|\leq A_1, \quad\left|\nabla_{1,v_3} \nabla_{1,v_2} \nabla_{1,v_1} b\left(x,y\right)\right| \leq A_2$$
Since
$$\sigma\left(x, y\right):=\left[\Sigma(x, y)+\bar \beta(x)+\frac{\delta}{\eta}I_d\right]^{1/2}$$
It is easy to check that, under Assumption \ref{assum:1} and \ref{assum:2} (a) (b),
\begin{align*}
	\Sigma(x, y) &=\mathbb E\left[b_n(x, y)b_n(x, y)^T\right]-\left[b(x, y)b(x, y)^T\right],\\
	\bar \beta(x) &=\mathbb E_{(s, a)\sim q}[V(s, a)\nabla Q\left(s, a ; x\right)][V(s, a)\nabla Q\left(s, a ; x\right)]^T
\end{align*}
are both bounded continuous differentiable from 1st to 3rd order in the $x$-coordinate, and $\Sigma(x, y)$ is bounded continuous differentiable in the $y$-coordinate, then we can get (\ref{property2.2}).
\section{Proof of lemmas in section \ref{proof}}{\label{B}}
\subsection{Proof of Lemma \ref{Markov}}
(i) For an $s \in \mathbb{Z}^{+}$, given $\tilde{\theta}_s$ (i.e. $\theta_{s m}$ ), by (\ref{original}) we know that the distribution of $\tilde{\theta}_{s+1}$ (i.e. $\theta_{(s+1) m}$ ) is uniquely determined by $\tilde{\theta}_s$ and the i.i.d. random variables $i_{s m+1}, \cdots, i_{s(m+1)}$, whence
$$
\mathbb{P}\left(\tilde{\theta}_{s+1} \in A \mid \tilde{\theta}_s, \cdots, \tilde{\theta}_0\right)=\mathbb{P}\left(\tilde{\theta}_{s+1} \in A \mid \tilde{\theta}_s\right), \quad A \in \mathcal{B}\left(\mathbb{R}^d\right) .
$$
So $(\tilde{\theta}_s)_{s \geq 0}$ is a Markov chain. Similarly, given $\tilde{X}_s$ (i.e. $X_{s m \eta}$ ), by (\ref{SDDE}), the distribution of $\tilde{X}_{s+1}$ is determined by $\tilde{X}_s$ and $\left(B_t\right)_{s m \eta \leq t \leq(s+1) m \eta}$, from which we know
$$
\mathbb{P}(\tilde{X}_{s+1} \in A \mid \tilde{X}_s, \cdots, \tilde{X}_1, \tilde{X}_0)=\mathbb{P}(\tilde{X}_{s+1} \in A \mid \tilde{X}_s), \quad A \in \mathcal{B}\left(\mathbb{R}^d\right),
$$
so $(\tilde{X}_s)_{s \geq 0}$ is a Markov chain.

(ii) The SDDE (\ref{SDDE}) restricted on the time period $[0, m \eta]$ reads as
\begin{equation}
	\mathrm{d}{X}_t=-b\left(X_t,X_0\right)\mathrm{d}t+\sqrt{\eta}\sigma(X_t,X_0)\mathrm{d}B_t, \text { for } t \in[0, m \eta]
\end{equation}
When $X_0=\theta_0$ is fixed, the above SDDE is equivalent to the following SDE:
\begin{equation}{\label{Internal}}
	\mathrm{d}{X}_t=-b\left(X_t,\theta_0\right)\mathrm{d}t+\sqrt{\eta}\sigma(X_t,\theta_0)\mathrm{d}B_t, \text { for } t \in[0, m \eta]
\end{equation}
thus is a time-homogeneous Markov process with states on $\mathbb{R}^d$. 

\subsection{Proof of Lemma \ref{lem3}}
For simplicity, denote $B(x):=-b\left(x,\theta_0\right)$ and $\sigma(x)=\sigma(x,\theta_0)$. Then, the SDE (\ref{Internal}) can be written as the following form:
\begin{align}{\label{M-SDE}}
	\mathrm{d} X_t=B\left(X_t\right) \mathrm{d} t+\sqrt{\eta} \sigma\left(X_t\right) \mathrm{d} B_t, \quad X_0=x,
\end{align}
where $B_t$ is a standard $d$-dimensional Brownian motion. 

Lemma \ref{property1}(i) and \ref{property2} can be rewritten as the following form:
\begin{lemma}{\label{A}}
	There exist constants $L \geq 0, A_i \geq 0$ with $i=1,2, \cdots, 5$, such that for any $x, y \in \mathbb{R}^d$ and unit vectors $v, v_1, v_2, v_3 \in \mathbb{R}^d$, we have
	\begin{align}
		\left|\nabla_v B(x)\right| &\leq L, \quad\left|\nabla_{v_2} \nabla_{v_1} B(x)\right| \leq A_1,\\
		\left|\nabla_{v_3} \nabla_{v_2} \nabla_{v_1} B(x)\right| &\leq A_2, \quad\left\|\nabla_{v_1} \sigma(x)\right\|_{\mathrm{HS}}^2 \leq A_3 \\
		\left\|\nabla_{v_1} \nabla_{v_2} \sigma(x)\right\|_{\mathrm{HS}}^2 &\leq A_4, \quad\left\|\nabla_{v_1} \nabla_{v_2} \nabla_{v_3} \sigma(x)\right\|_{\mathrm{HS}}^2 \leq A_5 .
	\end{align}
\end{lemma}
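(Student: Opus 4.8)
The plan is to recognize that Lemma \ref{A} is merely Lemmas \ref{property1}(i) and \ref{property2} re-expressed through the substitution that defines the single-argument maps. Recall that $B(x) := -b(x,\theta_0)$ and $\sigma(x) := \sigma(x,\theta_0)$, so both are obtained from the two-argument functions $b$ and $\sigma$ by freezing the second (delay) coordinate at the constant $\theta_0$. Consequently every directional derivative of $B$ or $\sigma$ in the variable $x$ coincides with the corresponding first-coordinate directional derivative of the two-argument function evaluated at $(x,\theta_0)$; in the paper's notation,
\begin{equation*}
\nabla_v B(x) = -\nabla_{1,v}\, b(x,\theta_0), \qquad \nabla_{v_1}\sigma(x) = \nabla_{1,v_1}\sigma(x,\theta_0),
\end{equation*}
and analogously for the higher-order derivatives. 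No new analytic input beyond this coordinate-freezing identity is required.

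With this identity in hand, I would simply transcribe the established bounds. For the drift, the Lipschitz estimate \eqref{Lipschitz} specialized to a fixed second argument gives $\abs{b(x_1,\theta_0)-b(x_2,\theta_0)}\le L\abs{x_1-x_2}$, whence $\abs{\nabla_v B(x)} = \abs{\nabla_{1,v} b(x,\theta_0)} \le L$ for every unit $v$ (this is exactly the bound recorded in the remark following Lemma \ref{property1}). The second- and third-order drift bounds $\abs{\nabla_{v_2}\nabla_{v_1}B(x)}\le A_1$ and $\abs{\nabla_{v_3}\nabla_{v_2}\nabla_{v_1}B(x)}\le A_2$ are then immediate from the first-coordinate estimates \eqref{property2.1} of Lemma \ref{property2} with $y=\theta_0$. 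For the diffusion, the three Hilbert--Schmidt estimates $\norm{\nabla_{v_1}\sigma(x)}_{\mathrm{HS}}^2\le A_3$, $\norm{\nabla_{v_1}\nabla_{v_2}\sigma(x)}_{\mathrm{HS}}^2\le A_4$, and $\norm{\nabla_{v_1}\nabla_{v_2}\nabla_{v_3}\sigma(x)}_{\mathrm{HS}}^2\le A_5$ follow directly from the first-coordinate diffusion estimates \eqref{property2.2}, again evaluated at the frozen point $(x,\theta_0)$.

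There is essentially no obstacle here: the single subtle point is that specializing to $y=\theta_0$ is legitimate only because the constants $L, A_1,\dots,A_5$ furnished by Lemmas \ref{property1} and \ref{property2} are uniform in the second coordinate, which is guaranteed by the "for any $x,y\in\mathbb{R}^d$" quantifier in those statements; the same constants therefore serve verbatim here. The content of this lemma is purely a notational repackaging that prepares the single-argument SDE \eqref{M-SDE} for the derivative estimates carried out in the proof of Lemma \ref{lem3}.
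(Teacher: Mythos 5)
Your proposal matches the paper exactly: the paper gives no separate proof of this lemma, simply introducing it with the words ``Lemma \ref{property1}(i) and \ref{property2} can be rewritten as the following form,'' which is precisely your coordinate-freezing observation that $B(x)=-b(x,\theta_0)$ and $\sigma(x)=\sigma(x,\theta_0)$ inherit the first-coordinate bounds \eqref{Lipschitz}, \eqref{property2.1} and \eqref{property2.2} with the same constants. Your identification of the first-order drift bound with the remark following Lemma \ref{property1} is likewise the paper's implicit reasoning, so the argument is correct and essentially identical.
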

\begin{remark}
	Since $S(x)=\sigma(x) \sigma(x)^T=\Sigma(x)+\bar \beta (x)+\frac{\delta}{\eta} I_d$, $\Sigma(x)$ and $\bar \beta (x)$ are semi-positive definite, for any $0 \neq \xi \in \mathbb{R}^d$, we have
	\begin{align}
		\xi^T S(x) \xi \geq \frac{\delta}{\eta} \xi^T I_d \xi=\frac{\delta}{\eta}|\xi|^2 .
	\end{align}
\end{remark}
There exists a unique solution to the SDE (\ref{M-SDE}) by Lemma \ref{A}. According to the proof of Lemma 3.3 in \cite{chen2022approximation} we can get the result of Lemma \ref{lem3}.

\subsection{Proof of Lemma \ref{lem3.1}}
Recall (\ref{Internal}), by Itô's formula, we have 
\begin{align*}
	\frac{\mathrm{d}}{\mathrm{d} s} \mathbb{E}\left|X_s^x\right|^2 & =2 \mathbb{E}\left\langle X_s,-b\left(X_s, \theta_0\right)\right\rangle+\eta \mathbb{E}\left\|\sigma\left(X_s, \theta_0\right)\right\|_{\mathrm{HS}}^2 \\
	& \leq 2L\mathbb{E}\left|X_s^x\right|^2+ 2\mathbb{E}\left|X_s\right|\left|b\left(0, \theta_0\right)\right|+2 \eta \mathbb{E}\left(K^2\left|X_s-\theta_0\right|^2+K^2+\beta_{max}+\frac{\delta d}{\eta}\right) \\
	& \leq \left(2L+4 K^2 \eta+\frac{L}{2}\right) \mathbb{E}\left|X_s^x\right|^2+\frac{2}{L} \mathbb{E}\left|b\left(0, \theta_0\right)\right|^2+4 \eta K^2 \mathbb{E}\left|\theta_0\right|^2+2 \eta(K^2+\beta_{max})+2\delta d \\
	& \leq 3L \mathbb{E}\left|X_s^x\right|^2+\frac{2}{L} \mathbb{E}\left|b\left(0, \theta_0\right)\right|^2+4 \eta K^2 \mathbb{E}\left|\theta_0\right|^2+2 \eta(K^2+\beta_{max})+2\delta d.
\end{align*}
where the first inequality using (\ref{Lipschitz1}), (\ref{growth}), the last two inequality following Young's inequality and the fact $\eta<\frac{L}{8K^2}$.

Solving this differential inequality with initial data $X_0^x=x$ by Gronwall's inequality, (\ref{Lipschitz1}) and $t\leq m$, we can get
\begin{align*}
	\mathbb{E}\left|X_t^x\right|^2 & \leq e^{3L t}\left[|x|^2+\frac{2\left(L^{-1} \mathbb{E}\left|b\left(0, \theta_0\right)\right|^2+2 \eta K^2 \mathbb{E}\left|\theta_0\right|^2+\eta(K^2+\beta_{max})+\delta d\right)}{3L}\right] \\
	& \leq C_{K, L,d,m,\beta_{max}, |b(0,0)|}\left(1+|x|^2+\mathbb{E}\left|\theta_0\right|^2+\delta\right) .
\end{align*}

By the Cauchy-Schwarz inequality, Itô's isometry, (\ref{Lipschitz1}) and (\ref{growth}), we have
\begin{align*}
	\mathbb{E}\left|X_t^x-x\right|^2 \leq & 2 \mathbb{E}\left|\int_0^t b\left(X_r, \theta_0\right) \mathrm{d} r\right|^2+2 \mathbb{E}\left|\int_0^t \sqrt{\eta} \sigma\left(X_r, \theta_0\right) \mathrm{d} B_r\right|^2 \\
	\leq & 2 t \int_0^t \mathbb{E}\left|b\left(X_r, \theta_0\right)\right|^2 \mathrm{~d} r+2 \eta \int_0^t \mathbb{E}\left\|\sigma\left(X_r, \theta_0\right)\right\|_{\mathrm{HS}}^2 \mathrm{~d} r \\
	\leq & 4 t \int_0^t\left(|b(0,0)|^2+L^2 \mathbb{E}\left|X_r\right|^2+L^2 \mathbb{E}\left|\theta_0\right|^2\right) \mathrm{d} r \\
	& +4 \eta \int_0^t\left(K^2 \mathbb{E}\left|X_r-\theta_0\right|^2+K^2+\beta_{max}+\frac{\delta d}{\eta}\right) \mathrm{d} r \\
	\leq & 4\left(L^2 t+2 K^2 \eta\right) \int_0^t \mathbb{E}\left|X_r\right|^2 \mathrm{~d} r+4 t\left[t|b(0,0)|^2+L^2(t+2 \eta) \mathbb{E}\left|\theta_0\right|^2+\eta\beta_{max}+ \delta d\right].
\end{align*}
which, together with (\ref{lem3.1.1}), implies
$$
\mathbb{E}\left|X_t^x-x\right|^2 \leq C_{K, L,m,d,\beta_{max}, |b(0,0)|}\left(1+|x|^2+\mathbb{E}\left|\theta_0\right|^2+\delta\right) t(t+\eta+\delta).
$$
\subsection{Proof of Lemma \ref{lem3.2}}
By (\ref{original}), it is easy to see
\begin{align*}
	\mathbb{E}\left|\theta_n\right|^4= & \mathbb{E}\left|\theta_{n-1}\right|^4+\mathbb{E}\left|\eta b_n\left(\theta_{n-1}, \theta_0\right)-[\eta \beta_I(\theta_{n-1})+\sqrt{\eta\delta}I_d]W_n\right|^4 \\
	& -4 \mathbb{E}\left[\left|\theta_{n-1}\right|^2\left\langle\theta_{n-1}, \eta b_n\left(\theta_{n-1}, \theta_0\right)-[\eta \beta_I(\theta_{n-1})+\sqrt{\eta\delta}I_d]W_n\right\rangle\right] \\
	& +4 \mathbb{E}\left[\left\langle\theta_{n-1}, \eta b_n\left(\theta_{n-1}, \theta_0\right)-[\eta \beta_I(\theta_{n-1})+\sqrt{\eta\delta}I_d]W_n\right\rangle^2\right] \\
	& +2 \mathbb{E}\left[\left|\theta_{n-1}\right|^2\left|\eta b_n\left(\theta_{n-1}, \theta_0\right)-[\eta \beta_I(\theta_{n-1})+\sqrt{\eta\delta}I_d]W_n\right|^2\right] \\
	& -4 \mathbb{E}\left[|\eta b_n\left(\theta_{n-1}, \theta_0\right)-[\eta \beta_I(\theta_{n-1})+\sqrt{\eta\delta}I_d]W_n|^2\right.\\
	&\quad\left.\left\langle\theta_{n-1}, \eta b_n(\theta_{n-1}, \theta_0)-[\eta \beta_I(\theta_{n-1})+\sqrt{\eta\delta}I_d]W_n\right\rangle\right] .
\end{align*}
Now we estimate each term on the right hand side.

For the second term, the fact $\eta<(\frac{1}{432 L^3})^{1 / 3}$, (\ref{Lipschitz0}) and $\mathbb{E}|W|^4 \leq 3 d^2$ imply
\begin{align*}
	& \mathbb{E}\left|\eta b_n\left(\theta_{n-1}, \theta_0\right)-[\eta \beta_I(\theta_{n-1})+\sqrt{\eta\delta}I_d]W_n\right|^4 \\
	\leq & 8 \eta^4\left[\mathbb{E}\left|b_n\left(\theta_{n-1}, \theta_0\right)\right|^4\right]+8 \mathbb{E}\|\eta \beta_I(\theta_{n-1})+\sqrt{\eta\delta}I_d\|_{\operatorname{HS}}^4 \mathbb{E}\left|W\right|^4 \\
	\leq & 216 \eta^4\left[L^4 \mathbb{E}\left|\theta_{n-1}\right|^4+L^4 \mathbb{E}\left|\theta_0\right|^4+|b(0,0)|^4\right]+64 \left(\mathbb{E} \eta^4\|\beta_I(\theta_{n-1})\|_{\operatorname{HS}}^4 +(\eta\delta)^2\right)\mathbb{E}|W|^4 \\
	\leq & \frac{L}{2} \eta \mathbb{E}\left|\theta_{n-1}\right|^4+216 \eta^4\left(L^4 \mathbb{E}\left|\theta_0\right|^4+|b(0,0)|^4\right)+192(\eta^4\beta_{max}^4+(\eta \delta)^2) d^2.
\end{align*}

For the third term, since $W_n$ is independent of $i$, $\theta_{n-1}$, and the fact that $i$ is independent of $\theta_{n-1}$ and uniformly distributed, (\ref{Lipschitz}) yields,
\begin{align*}
	&-4 \mathbb{E}\left[\left|\theta_{n-1}\right|^2\left\langle\theta_{n-1}, \eta b_n\left(\theta_{n-1}, \theta_0\right)-[\eta \beta_I(\theta_{n-1})+\sqrt{\eta\delta}I_d]W_n\right\rangle\right]\\
	=& -4\mathbb{E}\left[\left|\theta_{n-1}\right|^2\left\langle\theta_{n-1}, \eta b_n\left(\theta_{n-1}, \theta_0\right)\right\rangle\right] \\
	= & -4\eta \mathbb{E}\left[\left|\theta_{n-1}\right|^2\left\langle b\left(\theta_{n-1}, \theta_0\right)-b\left(0, \theta_0\right), \theta_{n-1}\right\rangle\right]-4\eta \mathbb{E}\left[\left|\theta_{n-1}\right|^2\left\langle b\left(0, \theta_0\right), \theta_{n-1}\right\rangle\right] \\
	\leq & 4L\eta \mathbb{E}\left|\theta_{n-1}\right|^4-4\eta \mathbb{E}\left[\left|\theta_{n-1}\right|^2\left\langle b\left(0, \theta_0\right), \theta_{n-1}\right\rangle\right]\\
	\leq &  4L \eta \mathbb{E}\left|\theta_{n-1}\right|^4+4 \eta \mathbb{E}\left[\left|\theta_{n-1}\right|^3|b(0,0)|\right]+4 \eta \mathbb{E}\left[\left|\theta_{n-1}\right|^3\left|\theta_0\right|\right] \\
	\leq & 5L \eta \mathbb{E}\left|\theta_{n-1}\right|^4+\frac{216|b(0,0)|^4}{L^3} \eta+\frac{216\mathbb{E}|\theta_0|^4}{L^3} \eta.
\end{align*}

For the fourth term, (\ref{Lipschitz0}), Young's inequality and the fact $\eta<\frac{1}{64 L}$ implies

\begin{align*}
	& 4 \mathbb{E}\left[\left\langle\theta_{n-1}, \eta b_n\left(\theta_{n-1}, \theta_0\right)-[\eta \beta_I(\theta_{n-1})+\sqrt{\eta\delta}I_d]W_n\right\rangle^2\right] \\
	\leq & 8 \eta^2 \mathbb{E}[|\theta_{n-1}|^2(|b_n(\theta_{n-1}, \theta_0)|^2)]+8 \mathbb{E}[|\theta_{n-1}|^2\|\eta \beta_I(\theta_{n-1})+\sqrt{\eta\delta}I_d\|_{\mathrm{HS}}^2|W_n|^2] \\
	\leq & 8 \eta^2 \mathbb{E}\left[\left|\theta_{n-1}\right|^2\left(2L^2 \mathbb{E}\left|\theta_{n-1}\right|^2+2L^2 \mathbb{E}\left|\theta_0\right|^2+|b(0,0)|^2\right)\right]\\
	&+16(\eta^2\beta_{max}^2+ \eta \delta )d  \mathbb{E}\left[\left|\theta_{n-1}\right|^2\right] \\
	\leq & \frac{L}{2} \eta \mathbb{E}\left|\theta_{n-1}\right|^4+\frac{2}{L} \mathbb{E}\left[2\eta L^2 \mathbb{E}\left|\theta_0\right|^2+\eta|b(0,0)|^2+(\eta\beta_{max}^2+  \delta )d \right]^2 \\
	\leq & \frac{L}{2} \eta \mathbb{E}\left|\theta_{n-1}\right|^4+\frac{8}{L}\left(4L^4 \eta^2 \mathbb{E}\left|\theta_0\right|^4+\eta^2|b(0,0)|^4+(\eta\beta_{max}^2 d)^2+(\delta d)^2\right) .
\end{align*}

The fifth term can be estimated by a similar calculation with the fourth term, and we have
\begin{align*}
	& 2 \mathbb{E}\left[\left|\theta_{n-1}\right|^2\left|\eta b_n\left(\theta_{n-1}, \theta_0\right)-[\eta \beta_I(\theta_{n-1})+\sqrt{\eta\delta}I_d]W_n\right|^2\right] \\
	\leq &  \frac{L}{4} \eta \mathbb{E}\left|\theta_{n-1}\right|^4+\frac{4}{L}\left(4L^4 \eta^2 \mathbb{E}\left|\theta_0\right|^4+\eta^2|b(0,0)|^4+(\eta\beta_{max}^2d)^2+(\delta d)^2\right) .
\end{align*}

For the last term, by (\ref{Lipschitz0}), the Hölder inequality, Young's inequality and the fact $\eta<\left(\frac{3}{464 L^2}\right)^{\frac{1}{2}}$, we can get
\begin{align*}
	& 4 \mathbb{E}\left[|\eta b_n\left(\theta_{n-1}, \theta_0\right)-[\eta \beta_I(\theta_{n-1})+\sqrt{\eta\delta}I_d]W_n|^2\right.\\
	&\quad\left.\left\langle\theta_{n-1}, \eta b_n(\theta_{n-1}, \theta_0)-[\eta \beta_I(\theta_{n-1})+\sqrt{\eta\delta}I_d]W_n\right\rangle\right]\\
	& \leq 16 \mathbb{E}\left[\left[\eta^3\left|b_n\left(\theta_{n-1}, \theta_0\right)\right|^3+\left|[\eta \beta_I(\theta_{n-1})+\sqrt{\eta\delta}I_d]W_n\right|^3\right]\left|\theta_{n-1}\right|\right] \\
	& \leq 16 \eta^3 \mathbb{E}\left[\left|\theta_{n-1}\right|\left[4L^3 \mathbb{E}\left|\theta_{n-1}\right|^3+4L^3 \mathbb{E}\left|\theta_0\right|^3+|b(0,0)|^3\right]\right]+64[(\eta\beta_{max})^3+(\eta \delta)^{\frac{3}{2}}] \mathbb{E}\left[\left|\theta_{n-1}\right||W|^3\right] \\
	& \leq \frac{3L}{4} \eta \mathbb{E}\left|\theta_{n-1}\right|^4+12\left[ \eta^3\left(4L^3 \mathbb{E}\left|\theta_0\right|^4+\frac{1}{L^{3/4}}|b(0,0)|^4\right)+4(\eta\beta_{max})^4d^2+12(\eta\delta d)^2\right] .
\end{align*}
Since $\eta<1$, the inequalities above imply
\begin{align}{\label{thetan}}
	\mathbb{E}\left|\theta_n\right|^4 \leq(1+7L \eta) \mathbb{E}\left|\theta_{n-1}\right|^4+C_{L}\left(|b(0,0)|^4+\mathbb{E}\left|\theta_0\right|^4+\beta_{max}^4d^2+\delta^2 d^2\right) \eta .
\end{align}
Therefore, by Gronwall's inequality, $\delta\leq 1$ and the fact $n\leq m$,
\begin{align*}
	\mathbb{E}\left|\theta_n^x\right|^4 & \leq(1+7L \eta)^n|x|^4+C_{ L}\left(|b(0,0)|^4+\mathbb{E}\left|\theta_0\right|^4+\beta_{max}^4d^2+\delta^2 d^2\right) \eta \sum_{j=0}^{n-1}(1+7L \eta)^j \\
	& \leq  C_{L,m, d,\beta_{max}, |b(0,0)|}\left(1+|x|^4+\mathbb{E}\left|\theta_0\right|^4\right).
\end{align*}
\subsection{Proof of Lemma \ref{lem4}}
For any $u_t(x)=\mathbb{E} h\left(X_t^x\right)$ with $k \geq 1$, by (\ref{generate2}), we have
\begin{align*}
	& \mathbb{E} \int_0^1 \mathcal{A}^Z u_t\left(X_{\eta s}^x\right) \mathrm{d} s \\
	= & -\eta \mathbb{E} \int_0^1\left\langle b\left(X_{\eta s}^x,\theta_0\right), \nabla u_t(X_{\eta s}^x)\right\rangle \mathrm{d} s+\frac{1}{2} \eta \mathbb{E} \int_0^1\left\langle\eta\Sigma(X_{\eta s}^x, \theta_0)+\eta\bar \beta(X_{\eta s}^x)+{\delta}I_d, \nabla^2 u_t(X_{\eta s}^x)\right\rangle_{\mathrm{HS}} \mathrm{d} s \\
	= &  - \mathbb{E} \int_0^\eta\left\langle b\left(X_{ s}^x,\theta_0\right), \nabla u_t(X_{s}^x)\right\rangle \mathrm{d} s+\frac{1}{2} \mathbb{E} \int_0^\eta\left\langle\eta\Sigma(X_{s}^x, \theta_0)+\eta\bar \beta(X_{ s}^x)+{\delta}I_d, \nabla^2 u_t(X_{ s}^x)\right\rangle_{\mathrm{HS}} \mathrm{d} s .
\end{align*}
By (\ref{generate1}), we have
\begin{align*}
	\mathcal{A}^\theta u_t(x) =\mathbb{E}\left[u_t\left(x-\eta b_n\left(x, \theta_0\right)+(\eta \beta_I\left(x\right)+\sqrt{\eta \delta}I_d) W\right)\right]-u_t(x).
\end{align*}
Then, by Taylor's expansion, we have
\begin{align*}
	\mathcal{A}^\theta u_t(x)= & \mathbb{E}\left[\langle\nabla u_t(x), -\eta b_n(x, \theta_0)+(\eta \beta_I(x)+\sqrt{\eta \delta}I_d) W\rangle\right]\\
	& +\frac{1}{2} \mathbb{E}\left\langle\nabla^2 u_t(x),[-\eta b_n(x, \theta_0)+(\eta \beta_I(x)+\sqrt{\eta \delta}I_d) W]\right. \\
	& \left.\quad[-\eta b_n(x, \theta_0)+(\eta \beta_I(x)+\sqrt{\eta \delta}I_d) W]^T\right\rangle_{\mathrm{HS}}+\mathbb{E}[\mathcal{R}^{u_t}(x)]  \\
	= & \left\langle\nabla u_t(x),-\eta b(x, \theta_0)\right\rangle+\frac{1}{2} \eta^2\left\langle\nabla^2 u_t(x), \left[b(x, \theta_0)\right]^2+\mathbb E[\sigma (x,\theta_0)]^2\right\rangle_{\mathrm{HS}} +\mathbb{E}\left[\mathcal{R}^{u_t}(x)\right],
\end{align*}
where
\begin{align*}
	\mathcal{R}^{u_t}(x) &=\int_0^1 \int_0^r\left\langle\nabla^2 u_t(x+s[-\eta b_n(x, \theta_0)+(\eta \beta_I(x)+\sqrt{\eta \delta}I_d) W])-\nabla^2 u_t(x),\right. \\
	&\quad {\left.[-\eta b_n(x, \theta_0)+(\eta \beta_I(x)+\sqrt{\eta \delta}I_d) W][-\eta b_n(x, \theta_0)+(\eta \beta_I(x)+\sqrt{\eta \delta}I_d) W]^T\right\rangle \mathrm{d} s \mathrm{~d} r . }
\end{align*}
Therefore, we have
$$
\left|\mathbb{E} \int_0^1\left[\mathcal{A}^Z u_t\left(Z_s^x\right)-\mathcal{A}^\theta u_t(x)\right] \mathrm{d} s\right| \leq \mathcal{J}_1+\mathcal{J}_2+\mathbb{E}\left|\mathcal{R}^{u_t}(x)\right|,
$$
where
\begin{align*}
	\mathcal{J}_1:=&\left|\mathbb{E} \int_0^\eta\left\langle\nabla u_t\left(X_s^x\right), b\left(X_{ s}^x,\theta_0\right)\right\rangle \mathrm{d} s  -\eta\left\langle\nabla u_t(x), b(x, \theta_0)\right\rangle\right.\\
	&\quad\quad\quad\left.+\frac{1}{2} \eta^2\left\langle\nabla^2 u_t(x), b(x, \theta_0)(b(x, \theta_0))^T\right\rangle_{\mathrm{HS}} \right|\\
	\mathcal{J}_2:=&\left|\frac{1}{2} \mathbb{E} \int_0^\eta\left\langle\eta\Sigma(X_{s}^x, \theta_0)+\eta\bar \beta(X_{ s}^x) +{\delta}I_d, \nabla^2 u_t(X_{ s}^x)\right\rangle_{\mathrm{HS}} \mathrm{d} s\right.\\
	&\quad\quad\quad\left.-\frac{1}{2} \eta^2\left\langle\nabla^2 u_t(x),\left[b(x, \theta_0)\right]^2+\mathbb E[\sigma (x,\theta_0)]^2\right\rangle_{\mathrm{HS}}\right|
\end{align*}

For $\mathcal{J}_1$, we have
\begin{align*}
	\mathcal{J}_1 \leq & \left|\mathbb{E} \int_0^\eta\left\langle\nabla u_t\left(X_s^x\right), b\left(X_{ s}^x,\theta_0\right)-b(x, \theta_0)\right\rangle \mathrm{d} s\right| \\
	& +\left|\mathbb{E} \int_0^\eta\left\langle\nabla u_t\left(X_s^x\right)-\nabla u_t(x), b(x, \theta_0)\right\rangle \mathrm{d} s+\frac{1}{2} \eta^2\left\langle\nabla^2 u_t(x), b(x, \theta_0)(b(x, \theta_0))^T\right\rangle_{\mathrm{HS}}\right| \\
	:= & \mathcal{J}_{11}+\mathcal{J}_{12} .
\end{align*}
As for $\mathcal{J}_{11}$, by (\ref{P1}), (\ref{Lipschitz}), the Cauchy-Schwarz inequality and (\ref{lem3.1.2}), one has
\begin{align*}
	\mathcal{J}_{11} & \leq C_{m,L} \int_0^\eta \mathbb{E}\left|X_s^x-x\right| \mathrm{d} s \\
	&\leq C_{K, L,m, d,\beta_{max}, |b(0,0)|}\int_0^\eta\left(1+|x|+\sqrt{\mathbb{E}\left|\theta_0\right|^2}+\delta^{\frac{1}{2}}\right) \sqrt{s(s+\eta+\delta)}\mathrm{d} s.\\
	& \leq C_{K, L,m, d,\beta_{max}, |b(0,0)|}\left(1+|x|+\sqrt{\mathbb{E}\left|\theta_0\right|^2}+\delta^{\frac{1}{2}}\right) \eta^{\frac{3}{2}}\left(\eta^{\frac{1}{2}}+\delta^{\frac{1}{2}}\right) .
\end{align*}
As for $\mathcal{J}_{12}$, since
\begin{align*}
	& \mathbb{E}\left\langle\nabla u_t\left(X_s^x\right)-\nabla u_t(x), b(x, \theta_0)\right\rangle \\
	= & \mathbb{E}\left\langle\nabla^2 u_t(x),\left(X_s^x-x\right)(b(x, \theta_0))^T\right\rangle_{\mathrm{HS}} \\
	& +\int_0^1 \mathbb{E}\left\langle\nabla^2 u_t\left(x+r\left(X_s^x-x\right)\right)-\nabla^2 u_t(x),\left(X_s^x-x\right)(b(x, \theta_0))^T\right\rangle_{\mathrm{HS}} \mathrm{d} r \\
	= & -\int_0^s \mathbb{E}\left\langle\nabla^2 u_t(x), b\left(X_v^x, \theta_0\right)(b(x, \theta_0))^T\right\rangle_{\mathrm{HS}} \mathrm{d} v \\
	& +\int_0^1 \mathbb{E}\left\langle\nabla^2 u_t\left(x+r\left(X_s^x-x\right)\right)-\nabla^2 u_t(x),\left(X_s^x-x\right)(b(x, \theta_0))^T\right\rangle_{\mathrm{HS}} \mathrm{d} r .
\end{align*}
By (\ref{P2}), (\ref{P3}) and (\ref{Lipschitz}), we have
\begin{align*}
	\mathcal{J}_{12} \leq & \left|\mathbb{E} \int_0^\eta \int_0^s \mathbb{E}\left\langle\nabla^2 u_t(x),\left(b\left(X_v^x, \theta_0\right)-b(x, \theta_0)\right)(b(x, \theta_0))^T\right\rangle_{\mathrm{HS}} \mathrm{d} v \mathrm{~d} s\right| \\
	& +\left|\int_0^\eta \int_0^1 \mathbb{E}\left\langle\nabla^2 u_t\left(x+r\left(X_s^x-x\right)\right)-\nabla^2 u_t(x),\left(X_s^x-x\right)(b(x, \theta_0))^T\right\rangle_{\mathrm{HS}} \mathrm{d} r \mathrm{~d} s\right| \\
	\leq & C_{A, d,|b\left(0, 0\right)|, L}\left(1+\frac{1}{\sqrt{\delta t}}\right)(1+|x|+\mathbb{E}|\theta_0|) \int_0^\eta \int_0^s \mathbb{E}\left|X_v^x-x\right| \mathrm{d} v \mathrm{~d} s \\
	& +C_{A, d,|b\left(0, 0\right)|, L}\left(1+\frac{1}{\delta t}+\frac{1}{t^{\frac{5}{4}}}\right)(1+|x|+\mathbb{E}|\theta_0|) \int_0^\eta \int_0^1 r \mathbb{E}\left|X_s^x-x\right|^2 \mathrm{~d} r \mathrm{~d} s .
\end{align*}
Then, by the Cauchy-Schwarz inequality, (\ref{lem3.1.2}) and the condition $\eta \leq \delta \leq 1$, we can get
\begin{align*}
	\mathcal{J}_{12} \leq& C_{A,K, L,m, d,\beta_{max}, |b(0,0)|}\left(1+\frac{1}{\sqrt{\delta t}}\right)\left(1+|x|^2+\mathbb{E}\left|\theta_0\right|^2\right) \eta^{\frac{5}{2}}\left(\eta^{\frac{1}{2}}+\delta^{\frac{1}{2}}\right)\\
	&+C_{A,K, L,m, d,\beta_{max}, |b(0,0)|}\left(1+\frac{1}{\delta t}+\frac{1}{t^{\frac{5}{4}}}\right)(1+|x|+\mathbb{E}|\theta_0|)\left(1+|x|^2+\mathbb{E}\left|\theta_0\right|^2\right) \eta^2(\eta+\delta)\\
	\leq& C_{A,K, L,m, d,\beta_{max}, |b(0,0)|}\left(1+\frac{1}{t}+\frac{\delta}{t^{\frac{5}{4}}}\right)(1+|x|+\mathbb{E}|\theta_0|)\left(1+|x|^2+\mathbb{E}\left|\theta_0\right|^2\right) \eta^2
\end{align*}
Hence,
$$
\mathcal{J}_1 \leq C_{A,K, L,m, d,\beta_{max}, |b(0,0)|}(1+|x|+\mathbb{E}|\theta_0|)\left(1+|x|^2+\mathbb{E}\left|\theta_0\right|^2\right)\left[\left(\frac{1}{t}+\frac{\delta}{t^{\frac{5}{4}}}\right) \eta^{\frac{1}{2}}+\delta^{\frac{1}{2}}\right] \eta^{\frac{3}{2}} .
$$

For $\mathcal{J}_2$, notice that $\eta \mathbb{E}\left[\sigma\left(x, \theta_0\right)\right]^2=\eta \mathbb{E}\left[\Sigma\left(x, \theta_0\right)\right]+\eta\mathbb{E}\left[\bar \beta(x)\right]+\delta I_d$, and for $x, y, z \in \mathbb{R}^d$, following the definition of $\Sigma(x, y)$, a straight calculation gives that
$$
\begin{aligned}
	&\Sigma(x, y)-\Sigma(z, y) \\
	=& \mathbb E\left[b_n(x, y)b_n(x, y)^T\right]-\left[b(x, y)b(x, y)^T\right]\\
	&- \mathbb E\left[b_n(z, y)b_n(z, y)^T\right]+\left[b(z, y)b(z, y)^T\right]\\
	= & \mathbb E\left[(b_n(x, y)-b_n(z, y))b_n(x, y)^T\right]+ \mathbb E\left[b_n(z, y)(b_n(x, y)-b_n(z, y))^T\right]\\
	&-\left[(b(x, y)-b(z, y)) b(x, y)^T\right]-\left[b(z, y)(b(x, y)-b(z, y))^T\right]
\end{aligned}
$$
By (\ref{Lipschitz}), and (\ref{grad-L}), we further have
$$
\|\Sigma(x, y)-\Sigma(z, y)\|_{\mathrm{HS}} \leq 2 LK(1+|x-y|+|z-y|)|x-z| \leq 2 LK(1+|x|+2|y|+|z|)|x-z| .
$$
Then, the Cauchy-Schwarz inequality, (\ref{grad-L1}), (\ref{P2}) and (\ref{P3}) imply
\begin{align*}
	\mathcal{J}_2 \leq & \frac{\eta}{2} \mathbb{E}\left|\int_0^\eta\left\langle\nabla^2 u_t\left(X_s^x\right), \Sigma\left(X_s^x, \theta_0\right)-\Sigma\left(x, \theta_0\right)\right\rangle_{\mathrm{HS}} \mathrm{d} s\right| \\
	& +\frac{1}{2} \mathbb{E}\left|\int_0^\eta\left\langle\nabla^2 u_t\left(X_s^x\right)-\nabla^2 u_t(x), \eta \Sigma\left(x, \theta_0\right)+\eta\bar\beta(x)+\delta I_d\right\rangle_{\mathrm{HS}} \mathrm{d} s\right| \\
	\leq & \eta C_{A,K, L, d}\left(1+\frac{1}{\sqrt{\delta t}}\right) \int_0^\eta \mathbb{E}\left[\left(1+\left|X_s^x\right|+\left|\theta_0\right|+|x|\right)\left|X_s^x-x\right|\right] \mathrm{d} s \\
	& +C_{A,K, L,d,\beta_{max}}\left(1+\frac{1}{\delta t}+\frac{1}{t^{\frac{5}{4}}}\right) \int_0^\eta \mathbb{E}\left[\left|X_s^x-x\right|\left(\eta+\eta|x|^2+\eta\left|\theta_0\right|^2+\delta\right)\right] \mathrm{d} s,
\end{align*}
By the Cauchy-Schwarz inequality, (\ref{lem3.1.1}) and (\ref{lem3.1.2}), one has
$$
\begin{gathered}	
	\mathcal{J}_2 \leq C_{A,K, L,m, d,\beta_{max}, |b(0,0)|}\left(1+\frac{1}{\sqrt{\delta t}}\right)\left(1+|x|^2+\mathbb{E}\left|\theta_0\right|^2+\delta\right) \eta^{\frac{5}{2}}\left(\eta^{\frac{1}{2}}+\delta^{\frac{1}{2}}\right) \\
	+C_{A,K, L,m, d,\beta_{max}, |b(0,0)|}\left(1+\frac{1}{\delta t}+\frac{1}{t^{\frac{5}{4}}}\right)\left(1+\sqrt{\mathbb{E}\left|\theta_0\right|^2}+\delta^{\frac{1}{2}}\right) \\
	\left(1+\sqrt{\mathbb{E}\left|\theta_0\right|^4}\right)\left(1+|x|^3\right) \eta^{\frac{3}{2}}\left(\eta^{\frac{1}{2}}+\delta^{\frac{1}{2}}\right)(\eta+\delta).
\end{gathered}
$$
The condition $\eta \leq \delta \leq 1$ further implies
$$
\mathcal{J}_2 \leq C_{A,K, L,m, d,\beta_{max}, |b(0,0)|}\left(1+\frac{1}{t}+\frac{\delta}{t^{\frac{5}{4}}}\right)\left(1+\mathbb{E}\left|\theta_0\right|^4\right)\left(1+|x|^3\right) \eta^{\frac{3}{2}} \delta^{\frac{1}{2}} $$

For $\mathbb{E}\left|\mathcal{R}^{u_t}(x)\right|$, by (\ref{P3}), (\ref{Lipschitz0}) and Hölder's inequality, we have
\begin{align*}
	\mathbb{E}\left|\mathcal{R}^{u_t}(x)\right|
	& \leq C_{A, L, d}\left(1+\frac{1}{\delta t}+\frac{1}{t^{\frac{5}{4}}}\right) \mathbb{E}\left|-\eta b_n(x, \theta_0)+(\eta \beta_I(x)+\sqrt{\eta \delta}I_d) W\right|^3 \\
	& \leq C_{A,K, L,m, d,\beta_{max}, |b(0,0)|}\left(1+\frac{1}{\delta t}+\frac{1}{t^{\frac{5}{4}}}\right)\left[\eta^3\left(1+|x|^3+\mathbb{E}\left|\theta_0\right|^3\right)+(\eta \delta)^{\frac{3}{2}}\right] \\
	& \leq C_{A,K, L,m, d,\beta_{max}, |b(0,0)|}\left(1+\frac{1}{t}+\frac{\delta}{t^{\frac{5}{4}}}\right)\left(1+|x|^3+\mathbb{E}\left|\theta_0\right|^3\right) \eta^{\frac{3}{2}} \delta^{\frac{1}{2}} .
\end{align*}
Combining all of above, we have
\begin{align*}
	& \left|\mathbb{E} \int_0^1\left[\mathcal{A}^Z u_t\left(Z_s^x\right)-\mathcal{A}^\theta u_t(x)\right] \mathrm{d} s\right| \\
	\leq & C_{A,K, L,m, d,\beta_{max}, |b(0,0)|}\left(1+\frac{1}{t}+\frac{\delta}{t^{\frac{5}{4}}}\right)\left(1+\mathbb{E}\left|\theta_0\right|^4\right)\left(1+|x|^3\right) \eta^{\frac{3}{2}} \delta^{\frac{1}{2}}.
\end{align*}

\bibliographystyle{amsplain}
\bibliography{Approximation_to_Deep_Q-Network_by_Stochastic_Delay_Differential_Equations}

\end{document}